\documentclass[letterpaper]{article} 
\usepackage[preprint]{aaai2027}  
\usepackage[hyphens]{url}  
\usepackage{graphicx} 
\usepackage{natbib}  
\usepackage{caption} 
\usepackage{algorithm}
\usepackage{algorithmic}

\usepackage{threeparttable}
\usepackage{multirow}
\usepackage{booktabs}
\usepackage{makecell}
\usepackage[table]{xcolor}
\usepackage{pifont}
\usepackage{array}
\usepackage{xcolor}
\usepackage{amsmath}
\usepackage{amssymb}
\usepackage{amsthm}

\newtheorem{theorem}{Theorem}

\usepackage{newfloat}
\usepackage{listings}
\DeclareCaptionStyle{ruled}{labelfont=normalfont,labelsep=colon,strut=off} 
\floatstyle{ruled}
\newfloat{listing}{tb}{lst}{}
\floatname{listing}{Listing}

\usepackage{booktabs}

\title{MoNo: Multiscale Optimal Transport Neural Operator \\ for Solving PDEs on General Geometries}
\author{
    Zijiang Yang\textsuperscript{\rm 1},
    Xiaomeng Wu\textsuperscript{\rm 1},
    Dongmei Fu\textsuperscript{\rm 1}\textsuperscript{\rm 2}\corresponding \\
}
\affiliations{
    \textsuperscript{\rm 1}School of Automation and Electrical Engineering, University of Science and Technology Beijing\\
    \textsuperscript{\rm 2}Beijing Engineering Research Center of Industrial Spectrum Imaging\\

    zijiangyang@xs.ustb.edu.cn, fdm\_ustb@ustb.edu.cn
}

\begin{document}

\maketitle

\begin{abstract}
Transformer-based neural operators have achieved substantial progress in solving Partial Differential Equations (PDEs) by projecting spatial observations into compact latent tokens and learning physical interactions in latent spaces.
However, we reveal that existing learnable projection mechanisms cannot ensure stable and balanced assignments from observation points to latent tokens, causing some latent tokens to be over-assigned while others remain underutilized.
This limitation further restricts the design of hierarchical architectures, as assignment imbalance is continuously inherited and amplified across latent spaces, eventually causing severe token collapse in deeper spaces.
To address these issues, we propose \textbf{MoNo} (\textbf{M}ultiscale \textbf{O}ptimal Transport \textbf{N}eural \textbf{O}perator), a progressive multiscale neural operator that efficiently solves PDEs on general geometries through stable latent-space construction.
At its core is \textbf{CoTAP} (\textbf{C}ross-scale \textbf{O}ptimal \textbf{T}ransport \textbf{A}ssignment and \textbf{P}rojection), a novel latent-space construction method that formulates cross-space assignment between adjacent spaces as an entropy-regularized optimal transport problem, thereby constructing balanced bidirectional projections and stable latent spaces.
CoTAP also ensures stable information transfer across multiple latent spaces, further enabling multiscale architectures on general geometries, which in turn support more efficient learning of long-range physical interactions.
Extensive experiments demonstrate that MoNo outperforms existing state-of-the-art neural operators in both prediction performance and computational efficiency. Code is available at https://github.com/ZijiangY1116/MoNo.
\end{abstract}

\section{Introduction}

Partial Differential Equations (PDEs) are fundamental tools for describing the evolution of physical systems~\cite{raissi2020hidden, karniadakis2021physics,wang2025learning}.
Neural operators~\cite{fno}, particularly Transformer-based variants~\cite{kovachki2023neural,transolverpp}, have substantially advanced efficient PDE solving by learning operator mappings from input observations to target physical fields through sequence-based representations.
As standard self-attention scales quadratically with the number of observation points~\cite{vaswani2017attention}, vanilla Transformer-based neural operators are computationally prohibitive for large-scale PDE problems~\cite{transolver}, motivating extensive research on performing attention in compact latent spaces~\cite{lno,upt,linearno}.
However, the mechanisms for constructing latent spaces and the stability of cross-space mappings remain underexplored, limiting efficient compression of observation sequences and constraining flexibility in architectural design.
As illustrated in Figure~\ref{fig:first_fig}, SOTA methods not only exhibit limited prediction performance but also remain computationally expensive despite employing lower-complexity attention mechanisms~\cite{linearno}.

\begin{figure}[t]
    \centering
    \includegraphics[width=0.99\linewidth]{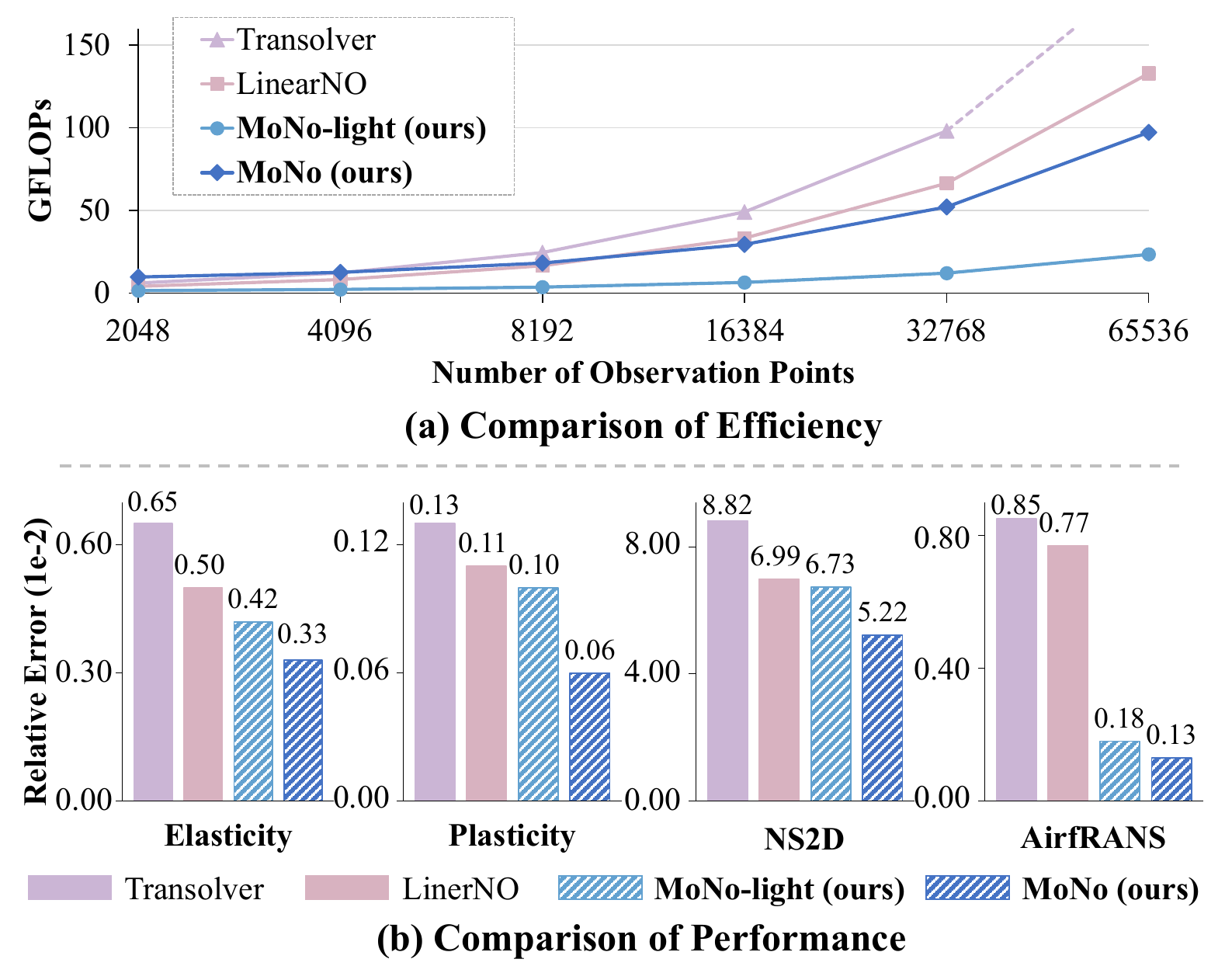}
    \caption{
    Comparison of MoNo with State-Of-The-Art (SOTA) methods.
    (a) MoNo achieves higher computational efficiency.
    (b) MoNo achieves lower prediction errors.
    }
    \label{fig:first_fig}
\end{figure}

To construct latent spaces, softmax-based projection is widely adopted, where the assignments from each observation point in the observation space to latent tokens in the latent space are represented by an unconstrained learnable projection matrix and normalized only afterward through independent row-wise or column-wise softmax operations~\cite{lno,linearno}.
In this work, we reveal that this latent-space construction lacks joint constraints on the assignment distribution, resulting in unstable and imbalanced assignments.
As shown in Figure~\ref{fig:motivation}(a), softmax-based projection concentrates dominant assignment weights on a small subset of latent tokens and produces substantially inconsistent assignment patterns between encoding and decoding.
Furthermore, when the model is extended into a multiscale latent-space architecture built with softmax-based projections, assignment imbalance is continuously inherited and amplified across latent spaces, eventually causing severe token collapse in deeper latent spaces (Figure~\ref{fig:motivation}(b)).
These issues reduce the effective capacity of latent representations, undermine stable cross-space information transfer, and hinder the design of multiscale architectures.

\begin{figure}[t]
    \centering
    \includegraphics[width=0.99\linewidth]{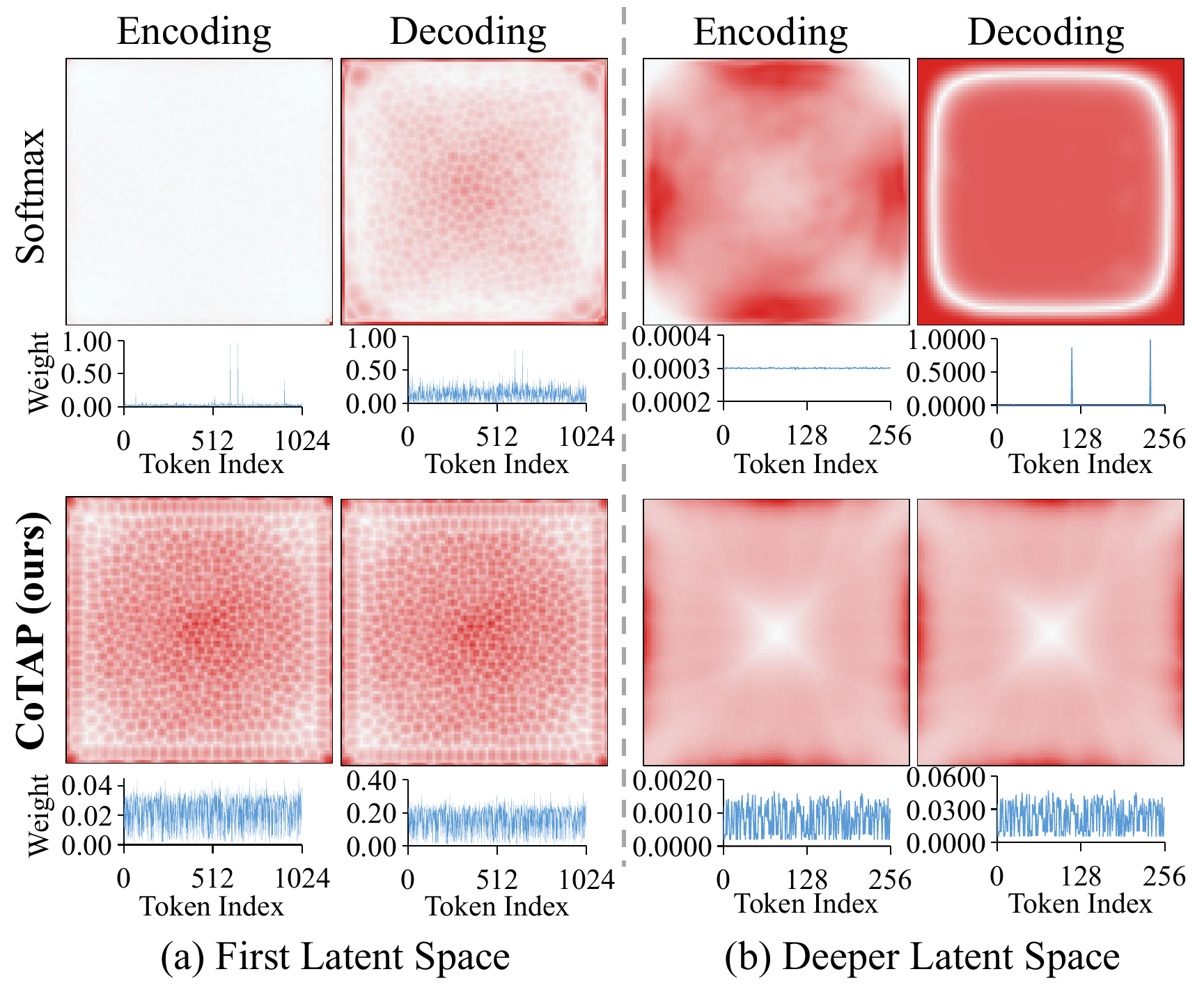}
    \caption{
    Comparison of assignment patterns on Darcy at a resolution of $85\times85$.
    The heatmaps visualize the maximum assignment of each observation point across latent tokens.
    In the heatmaps, colors transition from white to red as the assignment weights increase.
    The line plots show the maximum assignment weight of each latent token across observation points.
    Softmax-based projection not only fails to preserve consistency between encoding and decoding mappings, but also exhibits (a) imbalanced token assignments in the first latent space and (b) severe token collapse in the deeper latent space.
    }
    \label{fig:motivation}
\end{figure}

To address these issues, we propose \textbf{MoNo} (\textbf{M}ultiscale \textbf{O}ptimal Transport \textbf{N}eural \textbf{O}perator), an efficient framework for solving PDEs on general geometries by combining stable cross-space projections with a progressive multiscale hierarchy.
At its core is \textbf{CoTAP} (\textbf{C}ross-scale \textbf{O}ptimal \textbf{T}ransport \textbf{A}ssignment and \textbf{P}rojection), a novel latent-space construction method that formulates assignment learning between adjacent spaces as an entropy-regularized optimal transport problem, further constructing balanced bidirectional projections from a shared transport plan with uniform marginal constraints (Figure~\ref{fig:motivation}).
CoTAP further ensures stable information transfer across multiple latent spaces, thereby enabling efficient multiscale architectures on general geometries beyond standard grids~\cite{u-fno}.
Building on this hierarchy, MoNo progressively aggregates physical states into more compact latent spaces to model long-range physical interactions at substantially lower computational cost than repeatedly stacking single-scale blocks~\cite{transolver,linearno}, while fusing multiscale physical state tokens during decoding to predict the physical field.
Extensive experiments on seven widely used PDE benchmarks demonstrate that MoNo outperforms SOTA methods while maintaining superior computational efficiency.
Our contributions are summarized as follows:
\begin{itemize}
    \item We propose MoNo, a progressive multiscale neural operator for solving PDEs on general geometries by constructing a stable hierarchy of latent spaces at multiple compression scales to efficiently learn physical interactions across scales.
    
    \item We propose CoTAP, which formulates cross-space assignment as an entropy-regularized optimal transport problem and constructs balanced bidirectional projections, thereby enabling the stable construction of latent spaces.
    
    \item Extensive experiments on seven PDE benchmarks show that MoNo outperforms SOTA methods in both prediction performance and computational efficiency.
\end{itemize}

\section{Related Work}

\subsection{Learnable PDE Solver}

\noindent \textbf{Physics-Informed Neural Networks.}
Physics-Informed Neural Networks (PINNs) incorporate governing PDE constraints, initial conditions, and boundary conditions into the training objective to optimize a neural approximation of the target physical field~\cite{niaki2021physics, raissi2020hidden, wandel2022spline, yang2023dmis, luo2025physics, wu2026multi}.
However, PINNs generally require complete and explicit formulations of the governing PDEs and associated solution conditions, which are difficult to obtain in real-world problems.
In this work, we focus on observation-driven PDE solving without requiring complete prior definition of the physical system.

\noindent \textbf{Neural Operators.}
Neural operators have shown great potential for solving PDEs by learning operator mappings from input observations to target fields~\cite{lu2019deeponet,gno,fno, herde2024poseidon,mccabe2024multiple, zhou2024unisolver}.
Recently, Transformer-based neural operators represent discrete spatial observations and target physical fields as sequences, further formulating PDE solving as a sequence-to-sequence mapping~\cite{oformer}, which provides substantial flexibility in handling general geometries~\cite{transolver,transolverpp, ono}.
To efficiently process large-scale spatial observations, existing methods mainly introduce linear attention~\cite{oformer, factformer, gnot, linearno, transolverpp} to reduce the computational complexity and construct compact latent spaces to compress long observation sequences~\cite{lno, AROMA, upt, Aerogto, linearno, wen2026geometry}.

\begin{figure*}[t]
    \centering
    \includegraphics[width=0.95\linewidth]{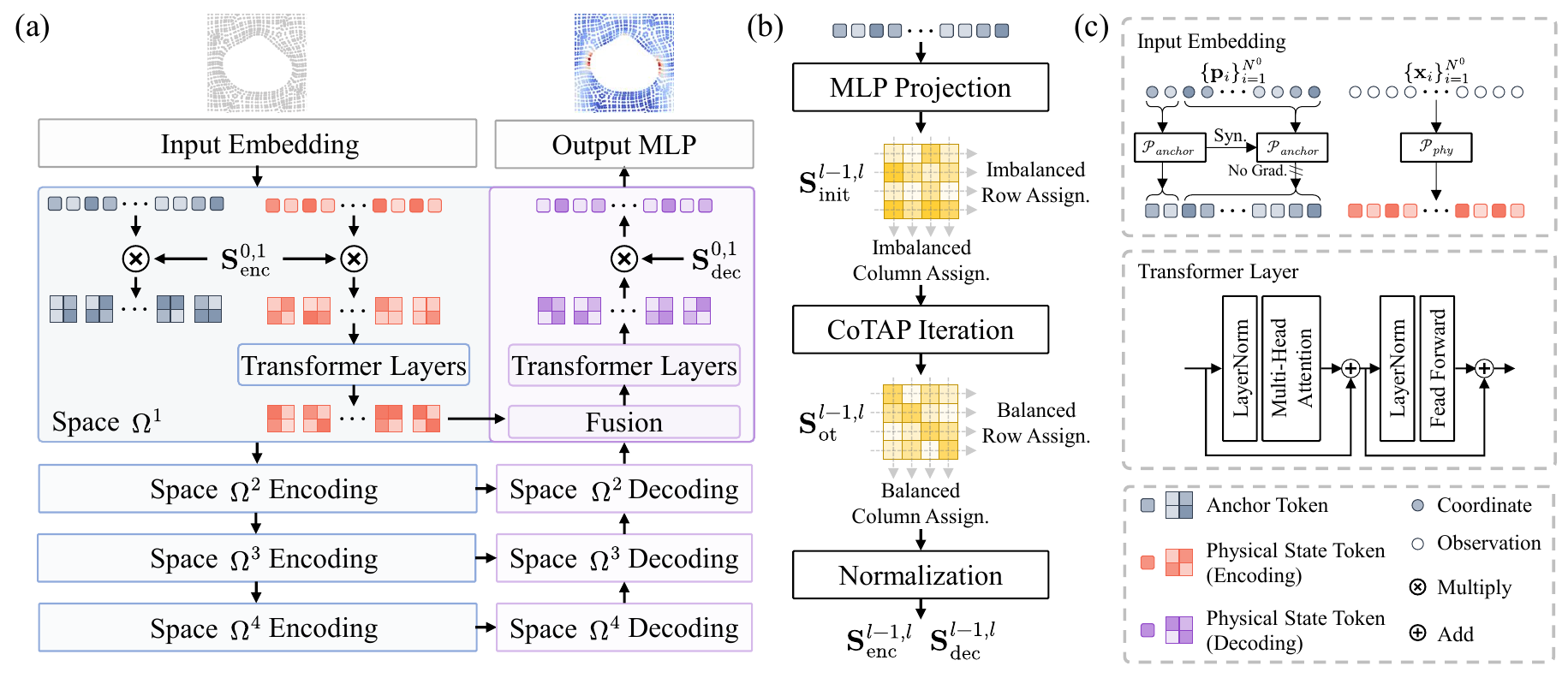}
    \caption{
    Illustration of MoNo.
    (a) Given a physical system, MoNo first embeds the spatial positions and input observations into high-dimensional tokens, and then efficiently learns physical interaction with a hierarchy of latent spaces.
    (b) CoTAP constructs an initial assignment matrix, iteratively solves for a balanced assignment matrix, and normalizes it to obtain the encoding matrix and decoding matrix.
    (c) Illustration of the input embedding and Transformer layer.
    }
    \label{fig:main_fig}
\end{figure*}

\subsection{Optimal Transport}

Optimal transport (OT) provides a unified mathematical framework for distribution matching across spaces~\cite{villani2009optimal}.
In neural operator research, OT has primarily been employed as a regularization constraint to improve transfer learning~\cite{yang2025physics} and dense reconstruction~\cite{ma2026physense}.
In addition, some studies use OT to align arbitrary input geometries with predefined grids~\cite{li2025geometric,qi2026spidersolver}.
In this work, we propose CoTAP, which directly predicts assignment relations between adjacent spaces and further constructs balanced bidirectional projections through OT, enabling stable cross-scale information transfer without predefined grids.

\section{Preliminaries}
We consider PDE-solving problems defined on a spatial domain $\Omega^0 \subset \mathbb{R}^{D_s}$, where $D_s$ denotes the spatial dimension.
Given input observations determined by the governing PDE, solution conditions, and external parameters, the goal is to estimate the corresponding target physical field.
Beyond settings restricted to regular grids, we focus on a general observation setting.
Let $P=\{\mathbf{p}_i\}_{i=1}^{N^0}$ denote a set of $N^0$ discrete spatial observation points in $\Omega^0$.
Given observations $X=\{\mathbf{x}_i\}_{i=1}^{N^0}$ at $P$, the model aims to predict the target physical field $Y=\{\mathbf{y}_i\}_{i=1}^{N^0}$ on the same point set, where $\mathbf{x}_i \in \mathbb{R}^{D_x}$ and $\mathbf{y}_i \in \mathbb{R}^{D_y}$, with $D_x$ and $D_y$ denoting the input observation and target physical field dimensions, respectively.
Each $\mathbf{x}_i$ is a tensor composed of task-specific physical quantities, such as material parameters.

\section{Method}

The overall framework of MoNo is illustrated in Figure~\ref{fig:main_fig}.
Given the observation point set $P$ and the corresponding input observations $X$, MoNo first embeds spatial positions and input observations into an anchor token set $G^0=\{\mathbf{g}_i^0\}_{i=1}^{N^0}$ and a physical state token set $F_{\mathrm{enc}}^0=\{\mathbf{f}_i^0\}_{i=1}^{N^0}$ by $\mathcal{P}_{\mathrm{anchor}}$ and $\mathcal{P}_{\mathrm{phy}}$, respectively, where $\mathbf{g}_i^0\in\mathbb{R}^{D_g}$ and $\mathbf{f}_i^0\in\mathbb{R}^{D_f}$ denote the anchor token and physical state token of the $i$-th observation point, and $D_g$ and $D_f$ denote dimensions.
As the mapping from $P$ to $G^0$ is smooth, gradients through $\mathcal{P}_{\mathrm{anchor}}$ are retained for only 25\% of randomly sampled $P$, while the remaining points are processed by a gradient-free copy, providing performance comparable to full-point backpropagation with lower training memory (Figure~\ref{fig:main_fig}(c)).
Furthermore, MoNo progressively builds stable latent spaces at multiple compressed scales to learn physical interactions.
Finally, the decoder maps multiscale latent tokens back to the original observation points and outputs the predicted target field $\hat{Y}=\{\hat{\mathbf{y}}_i\}_{i=1}^{N^0}$.

\subsection{CoTAP}
\noindent \textbf{Cross-space Assignment and Projection.}
For any $l$-th latent space $\Omega^l$, where $l\in\{1,\ldots,L\}$, we use CoTAP to construct the cross-space assignment and projection between the preceding space $\Omega^{l-1}$ and the current latent space $\Omega^l$.
$\Omega^0$ denotes the original observation space, while $\Omega^l$ for $l\geq1$ denotes the $l$-th latent space.
For the preceding space $\Omega^{l-1}$, we are given the anchor token set $G^{l-1}=\{\mathbf{g}_i^{l-1}\}_{i=1}^{N^{l-1}}$, where $N^{l-1}$ denotes the number of elements in $\Omega^{l-1}$ and $\mathbf{g}_i^{l-1}$ denotes the anchor token of the $i$-th element.

CoTAP first represents $G^{l-1}$ in matrix form $G^{l-1}\in\mathbb{R}^{N^{l-1}\times D_g}$ and constructs an initial assignment matrix $\mathbf{S}_{\mathrm{init}}^{l-1,l}$ through an MLP projector $\mathcal{P}^l(\cdot)$:
\begin{equation}
    \mathbf{S}_{\mathrm{init}}^{l-1,l}
    =
    \mathcal{P}^l(G^{l-1})
    \in \mathbb{R}^{N^{l-1}\times N^{l}},
\end{equation}
where $\mathbf{S}_{\mathrm{init}, ij}^{l-1,l}$ denotes the initial assignment score between the $i$-th element in $\Omega^{l-1}$ and the $j$-th latent token in $\Omega^{l}$.

CoTAP further treats $\mathbf{S}_{\mathrm{init}}^{l-1,l}$ as the transport reward and solves the entropy-regularized optimal transport problem:
\begin{equation}
    \mathbf{S}_{\mathrm{ot}}^{l-1,l}
    =
    \arg\max_{\mathbf{S}\in\mathcal{C}^{l-1,l}}
    \left\langle
    \mathbf{S},
    \mathbf{S}_{\mathrm{init}}^{l-1,l}
    \right\rangle
    +
    \tau\mathcal{H}(\mathbf{S}),
\label{eq:ot_problem}
\end{equation}
where $\mathbf{S}_{\mathrm{ot}}^{l-1,l}\in\mathbb{R}^{N^{l-1}\times N^l}$ denotes the OT-normalized assignment matrix, $\tau=1$ denotes the temperature coefficient, $\mathcal{H}(\mathbf{S})=-\sum_{i=1}^{N^{l-1}}\sum_{j=1}^{N^{l}}S_{ij}\log S_{ij}$ denotes the entropy regularization term, and $\mathcal{C}^{l-1,l}$ is the transport polytope with uniform marginal constraints.
These marginal constraints ensure that each input element sends the same total mass and each latent token receives the same total mass.

Based on $\mathbf{S}_{\mathrm{ot}}^{l-1,l}$, CoTAP constructs the bidirectional projection matrices between $\Omega^{l-1}$ and $\Omega^{l}$.
Since each column and row of $\mathbf{S}_{\mathrm{ot}}^{l-1,l}$ sums to $1/N^l$ and $1/N^{l-1}$, respectively, CoTAP further constructs projections through normalization:
\begin{equation}
\begin{aligned}
    \mathbf{S}_{\mathrm{enc}}^{l-1,l}
    &=
    N^l\mathbf{S}_{\mathrm{ot}}^{l-1,l},\\
    \mathbf{S}_{\mathrm{dec}}^{l-1,l}
    &=
    N^{l-1}\mathbf{S}_{\mathrm{ot}}^{l-1,l},
\end{aligned}
\end{equation}
where $\mathbf{S}_{\mathrm{enc}}^{l-1,l}$ denotes the projection from $\Omega^{l-1}$ to $\Omega^{l}$ and $\mathbf{S}_{\mathrm{dec}}^{l-1,l}$ denotes the projection from $\Omega^{l}$ to $\Omega^{l-1}$.
Since both projections are constructed from $\mathbf{S}_{\mathrm{ot}}^{l-1,l}$, CoTAP maintains a consistent mass-transfer relation in the encoding and decoding, rather than arbitrary learnable token mixing.

CoTAP solves Equation~\eqref{eq:ot_problem} with log-domain Sinkhorn iterations~\cite{sinkhorn1967diagonal}.
To reduce the computational cost for large-scale tasks, we further implement a Triton-based fused Sinkhorn solver, termed CoTAP Iteration.
Specifically, CoTAP Iteration directly fuses the log-sum-exp reductions into GPU kernels, thereby avoiding the materialization of large intermediate tensors.
For the backward pass, we derive and implement gradient computation equivalent to backpropagation through standard Sinkhorn iterations.
The details of CoTAP Iteration are provided in the Appendix.

We further demonstrate that the composition of CoTAP and a single latent self-attention layer can be represented as a learnable integral operator on the original physical domain $\Omega^0$, consistent with the operator interpretation of other methods~\cite{transolver, linearno}.
The detailed proof of Theorem~\ref{theorem:1} is provided in the Appendix.

\begin{theorem}[\textbf{CoTAP with latent self-attention as an integral operator on $\Omega^0$}]
\label{theorem:1}
Let $\Omega^0$ be the original physical domain endowed with a probability measure $\mu^0$.
For any bounded measurable physical state token field
$\mathcal{F}_{\mathrm{enc}}^0:\Omega^0\rightarrow\mathbb{R}^{D_f}$
and any continuous anchor token field
$\mathcal{G}^0:\Omega^0\rightarrow\mathbb{R}^{D_g}$,
the composition of CoTAP and a single latent self-attention layer can be represented as an input-dependent integral kernel operator on $\Omega^0$.
Specifically, for $\mu^0$-almost every $\mathbf{p}\in\Omega^0$,
\begin{equation}
\mathcal{T}
\bigl(\mathcal{G}^0,\mathcal{F}_{\mathrm{enc}}^0\bigr)(\mathbf{p})
=
\int_{\Omega^0}
\kappa
\bigl(\mathbf{p},\boldsymbol{\xi}\bigr)
\mathcal{F}_{\mathrm{enc}}^0(\boldsymbol{\xi})
\mathbf{W}_v
\,d\mu^0(\boldsymbol{\xi}),
\end{equation}
where $\mathcal{T}$ denotes the composite operator, $\mathbf{W}_v$ denotes the value projection in latent self-attention, and $\kappa$ denotes the input-dependent kernel.
\end{theorem}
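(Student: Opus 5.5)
We prove Theorem~\ref{theorem:1} by passing CoTAP to the continuum, with the discrete point set replaced by the measure space $(\Omega^0,\mu^0)$, and then tracking the composition explicitly. We treat the case $l=1$; deeper levels follow by iterating the same construction.

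\textbf{Step 1: the continuum transport plan.} Since $\mathcal{G}^0$ is continuous and $\mathcal{P}^1$ is an MLP, the score $s_j(\boldsymbol{\xi})=\mathcal{P}^1(\mathcal{G}^0(\boldsymbol{\xi}))_j$ is measurable for $j=1,\dots,N^1$. We pose the semi-discrete entropic OT problem between $\mu^0$ and the uniform measure on $N^1$ tokens, with reward $s_j$ and $\tau=1$. Its maximizer has the Gibbs form
\[
\pi_j(\boldsymbol{\xi}) = \exp\bigl(s_j(\boldsymbol{\xi})+\alpha(\boldsymbol{\xi})+\beta_j\bigr),
\]
where the potentials satisfy $\sum_j \pi_j(\boldsymbol{\xi})=1$ for $\mu^0$-a.e.\ $\boldsymbol{\xi}$ (after the row rescaling) and $\int \pi_j\,d\mu^0 = 1/N^1$.

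Existence and uniqueness of $\beta\in\mathbb{R}^{N^1}$, up to an additive constant, is the step we expect to be the main obstacle. The plan is to eliminate $\alpha$ through $\alpha(\boldsymbol{\xi})=-\log\sum_j e^{s_j(\boldsymbol{\xi})+\beta_j}$. This reduces the problem to maximizing the concave dual function
\[
\Phi(\beta)=\frac{1}{N^1}\sum_j\beta_j-\int\log\sum_j e^{s_j+\beta_j}\,d\mu^0
\]
over $\beta\in\mathbb{R}^{N^1}$. Coercivity on the complement of the constant direction holds if the scores are bounded, which is true when $\mathcal{G}^0(\Omega^0)$ is bounded, for instance when $\Omega^0$ is compact. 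In the general case we instead invoke standard results on entropic OT with bounded measurable rewards. The discrete Sinkhorn fixed point is then exactly the empirical version of this plan.

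\textbf{Step 2: encoding and decoding.} Following the normalization in CoTAP, the latent tokens are
\[
\mathbf{z}_j = N^1\int \pi_j(\boldsymbol{\xi})\,\mathcal{F}^0_{\mathrm{enc}}(\boldsymbol{\xi})\,d\mu^0(\boldsymbol{\xi}).
\]
This integral is well defined because $\mathcal{F}^0_{\mathrm{enc}}$ is bounded and $\pi_j\le 1$.

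The latent self-attention computes attention weights $A_{jk}=\mathrm{softmax}_k(\mathbf{z}_j\mathbf{W}_q(\mathbf{z}_k\mathbf{W}_k)^\top/\sqrt{d})$ and outputs $\mathbf{o}_j=\sum_k A_{jk}\mathbf{z}_k\mathbf{W}_v$. The weights $A_{jk}$ are scalars that depend on the input but not on $\boldsymbol{\xi}$.

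Decoding uses $\mathbf{S}_{\mathrm{dec}}=N^0\mathbf{S}_{\mathrm{ot}}$, whose rows sum to one. In the continuum this gives
\[
\mathcal{T}(\mathbf{p})=\sum_j \pi_j(\mathbf{p})\,\mathbf{o}_j,
\]
which is defined for $\mu^0$-a.e.\ $\mathbf{p}$.

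\textbf{Step 3: collecting the kernel.} Substituting the expressions from Step~2 and using linearity of the integral to exchange the finite sums with it, we obtain
\[
\mathcal{T}(\mathbf{p})=\int_{\Omega^0}\kappa(\mathbf{p},\boldsymbol{\xi})\,\mathcal{F}^0_{\mathrm{enc}}(\boldsymbol{\xi})\,\mathbf{W}_v\,d\mu^0(\boldsymbol{\xi}),
\]
with kernel
\[
\kappa(\mathbf{p},\boldsymbol{\xi})=N^1\sum_{j,k}\pi_j(\mathbf{p})\,A_{jk}\,\pi_k(\boldsymbol{\xi}).
\]
Here $\mathbf{W}_v$ factors out on the right because it acts linearly on each $\mathbf{z}_k$.

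Finally, we check that $\kappa$ is measurable and bounded, with $0\le\kappa\le N^1$. This holds because each $\pi_j$ takes values in $[0,1]$ and the $A_{jk}$ are row-stochastic. As a result the integral converges, and the representation holds $\mu^0$-a.e., which proves the theorem.
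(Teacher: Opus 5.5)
Your argument is correct, but it follows a different route from the paper.

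The paper never constructs the continuum transport plan. It works on an abstract latent measure space $(\Omega^1,\mu^1)$ and simply assumes that the CoTAP coupling has a density $r$ with unit marginals. It represents attention by a generic nonnegative kernel $\alpha$, normalized in its second argument, and forms the intermediate kernel $\int_{\Omega^1} r(\mathbf{p},\boldsymbol{\zeta})\alpha(\boldsymbol{\zeta},\boldsymbol{\eta})\,d\mu^1(\boldsymbol{\zeta})$. It then invokes Fubini, justified by boundedness of $\mathcal{F}^0_{\mathrm{enc}}$ together with the marginal and normalization identities, to read off $\kappa$.

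You instead keep the latent space as the $N^1$ actual tokens and derive the coupling explicitly as a softmax of the scores shifted by dual potentials. Because every latent integral is then a finite sum, you need no Fubini argument at all. Your kernel is exactly the paper's specialized to $\mu^1$ uniform on $N^1$ atoms, with $r(\boldsymbol{\xi},j)=N^1\pi_j(\boldsymbol{\xi})$ and $\alpha(j,k)=N^1A_{jk}$.

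The trade-off is this. The paper's version buys generality (continuous latent spaces, arbitrary normalized attention kernels) at the cost of an unverified existence hypothesis. Yours is faithful to the implemented finite-token architecture, actually produces the continuum CoTAP plan (with the Sinkhorn fixed point as its empirical version), and yields the explicit bound $0\le\kappa\le N^1$.

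One soft spot: for unbounded scores, falling back on results for bounded rewards does not apply. This is easily repaired. Replace your $\Phi$ by
$\tilde\Phi(\beta)=\frac{1}{N^1}\sum_j\beta_j-\int\log\sum_j\pi^0_j e^{\beta_j}\,d\mu^0$, where $\pi^0=\operatorname{softmax}(s)$. It differs from $\Phi$ by a constant whenever $\Phi$ is finite, and it has the same gradient $\frac{1}{N^1}-\int\pi_j\,d\mu^0$. Its integrand lies in $[\min_j\beta_j,\max_j\beta_j]$, so $\tilde\Phi$ is always finite. Dominated convergence along rays gives coercivity on $\mathbf{1}^\perp$, and strict positivity of every $\pi_j$ gives strict concavity there. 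Alternatively, you can simply adopt the paper's hypothesis that the coupling exists.
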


\subsection{Progressive Multiscale Modeling}
Based on CoTAP, MoNo builds an encoding-decoding hierarchy with $L$ latent spaces to learn physical interactions at progressively compressed scales.

\noindent \textbf{Encoding.}
For each latent space $\Omega^l$, where $l\in\{1,\ldots,L\}$, the encoding stage projects anchor tokens and physical state tokens from $\Omega^{l-1}$ to $\Omega^l$ and learns physical interactions:
\begin{equation}
\begin{aligned}
    F_{\mathrm{enc}}^l
    &=
    \mathcal{M}_{\mathrm{enc}}^l
    \left(
    \left(\mathbf{S}_{\mathrm{enc}}^{l-1,l}\right)^{\top}
    F_{\mathrm{enc}}^{l-1}
    \right),\\
    G^l
    &=
    \left(\mathbf{S}_{\mathrm{enc}}^{l-1,l}\right)^{\top}
    G^{l-1},
\end{aligned}
\label{eq:encoding}
\end{equation}
where $\mathcal{M}_{\mathrm{enc}}^l$, $F_{\mathrm{enc}}^l$, and $G^l$ denote the encoding transformer block, the updated physical state token set, and the projected anchor token set in $\Omega^l$, respectively.
By repeating Equation~\eqref{eq:encoding} progressively, MoNo preserves local geometry and fine-grained physical information in shallow latent spaces while forming more compressed and global physical state representations in deeper latent spaces.

\begin{table*}[t]
  \centering
  \small
  \renewcommand\tabcolsep{15pt}
  \renewcommand{\arraystretch}{1.05}
  \begin{tabular}{l|ccccc}
    \toprule
    \multirow{2}{*}{Method} & \multicolumn{3}{c}{Structured Mesh} & \multicolumn{1}{c}{Standard Grid} & \multicolumn{1}{c}{Point Cloud} \\
    \cmidrule(lr){2-4} \cmidrule(lr){5-5} \cmidrule(lr){6-6}
    & Airfoil & Pipe & Plasticity & Navier-Stokes & Elasticity \\
    \midrule
    FNO~\cite{fno} & - & - & - & 0.1556 & - \\
    GEO-FNO~\cite{geo-fno} & 0.0138 & 0.0067 & 0.0074 & 0.1556 & 0.0229 \\
    F-FNO~\cite{f-fno} & 0.0078 & 0.0070 & 0.0047 & 0.2322 & 0.0263 \\
    Galerkin~\cite{galerkin} & 0.0118 & 0.0098 & 0.0120 & 0.1401 & 0.0240 \\
    OFormer (Li et al. 2023) & 0.0183 & 0.0168 & 0.0017 & 0.1705 & 0.0183 \\
    GNOT~\cite{gnot} & 0.0076 & 0.0047 & 0.0336 & 0.1380 & 0.0086 \\
    FactFormer (Li et al. 2023) & 0.0071 & 0.0060 & 0.0312 & 0.1214 & - \\
    ONO~\cite{ono} & 0.0061 & 0.0052 & 0.0048 & 0.1195 & 0.0118 \\
    LSM~\cite{lsm} & 0.0059 & 0.0050 & 0.0025 & 0.1535 & 0.0218 \\
    LNO~\cite{lno} & 0.0053 & 0.0031 & 0.0028 & 0.0830 & 0.0066 \\
    Transolver~\cite{transolver} & 0.0053 & 0.0030 & 0.0013 & 0.0882 & 0.0065 \\
    Transolver++~\cite{transolverpp} & 0.0051 & 0.0027 & 0.0014 & 0.1010 & 0.0064 \\
    LinearNO~\cite{linearno} & \underline{0.0049} & \underline{0.0024} & 0.0011 & 0.0699 & 0.0050 \\
    \midrule
    \textbf{MoNo-Light (ours)} & \textbf{0.0048} & 0.0027 & \underline{0.0010} & \underline{0.0673} & \underline{0.0042}\\
    \textbf{MoNo (ours)} & \textbf{0.0048} & \textbf{0.0021} & \textbf{0.0006} & \textbf{0.0522} & \textbf{0.0033} \\
    \bottomrule
  \end{tabular}
  \caption{Comparison on standard benchmarks in relative L2 ($\downarrow$). "-" denotes that the method is not applicable to the corresponding task. The best results are highlighted in bold, and the second-best results are underlined. MoNo outperforms other methods.}
  \label{tab:standard_bench}
\end{table*}

\noindent \textbf{Decoding.}
Starting from the deepest latent space $\Omega^L$, the decoding stage projects physical state tokens back through the hierarchy and fuses them with the encoded tokens:
\begin{equation}
    F_{\mathrm{dec}}^l
    =
    \begin{cases}
    \mathcal{M}_{\mathrm{dec}}^l(F_{\mathrm{enc}}^l),
    & l=L,\\
    \mathcal{M}_{\mathrm{dec}}^l
    \left(
    F_{\mathrm{enc}}^l
    +
    \mathbf{S}_{\mathrm{dec}}^{l,l+1}
    F_{\mathrm{dec}}^{l+1}
    \right),
    & 1\leq l<L,
    \end{cases}
\label{eq:decoding}
\end{equation}
where $\mathcal{M}_{\mathrm{dec}}^l$ and $F_{\mathrm{dec}}^l$ denote the decoding transformer block and the decoded physical state token set in $\Omega^l$, respectively.
Finally, $F_{\mathrm{dec}}^1$ is projected back to the original observation space as $F_{\mathrm{dec}}^0=\mathbf{S}_{\mathrm{dec}}^{0,1}F_{\mathrm{dec}}^1$.
An MLP output head then maps $F_{\mathrm{dec}}^0$ to the target physical field prediction $\hat{Y}$.

Furthermore, we demonstrate that MoNo defines a neural operator that maps input functions to target physical-field functions.
The detailed proof of Theorem~\ref{theorem:2} is provided in the Appendix.

\begin{theorem}[\textbf{MoNo as a neural operator}]
\label{theorem:2}
Given a continuous anchor token field
$\mathcal{G}^0:\Omega^0\rightarrow\mathbb{R}^{D_g}$
and a bounded measurable physical state token field
$\mathcal{F}_{\mathrm{enc}}^0:\Omega^0\rightarrow\mathbb{R}^{D_f}$,
MoNo with $L$ latent spaces defines the following learnable function-to-function mapping:
\begin{equation}
\widehat{\mathcal{Y}}
=
\mathcal{N}_{\theta}
\bigl(
\mathcal{G}^0,
\mathcal{F}_{\mathrm{enc}}^0
\bigr),
\qquad
\widehat{\mathcal{Y}}:
\Omega^0\rightarrow\mathbb{R}^{D_y},
\end{equation}
where $\mathcal{N}_{\theta}$ denotes the MoNo neural operator parameterized by the learnable parameters $\theta$, and $\widehat{\mathcal{Y}}$ denotes the predicted physical-field function defined on $\Omega^0$.
\end{theorem}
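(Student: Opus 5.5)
We prove Theorem~\ref{theorem:2} by induction over the encoding--decoding hierarchy. We lift each discrete operation of MoNo to a map between function spaces and check that the composite is a well-defined map $(\mathcal{G}^0,\mathcal{F}_{\mathrm{enc}}^0)\mapsto\widehat{\mathcal{Y}}$ with $\widehat{\mathcal{Y}}:\Omega^0\to\mathbb{R}^{D_y}$ (bounded measurable).

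\textbf{Continuum form of the first level.} We lift the first level using the construction behind Theorem~\ref{theorem:1}. The MLP projector $\mathcal{P}^1$ applied to the continuous field $\mathcal{G}^0$ gives a bounded continuous score $s(\boldsymbol{\xi},j)$ for each of the $N^1$ latent tokens. The entropic OT problem with marginals $\mu^0$ and uniform weights on $\{1,\dots,N^1\}$ has a unique solution of Gibbs form
\[
\pi(d\boldsymbol{\xi},j)=\exp\bigl((s(\boldsymbol{\xi},j)+a(\boldsymbol{\xi})+b_j)/\tau\bigr)\,d\mu^0(\boldsymbol{\xi}).
\]
Existence and uniqueness of the potentials $a,b$ follow from the standard Schr\"odinger/Sinkhorn theory for bounded costs. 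Since the costs are bounded, the density is bounded and positive. The encoding and decoding projections then become integral operators:
\begin{itemize}
\item encoding: $F^1_j=N^1\int\pi(\boldsymbol{\xi},j)\,\mathcal{F}^0_{\mathrm{enc}}(\boldsymbol{\xi})\,d\mu^0$, and likewise for $G^1_j$;
\item decoding: $(\mathbf{S}_{\mathrm{dec}}^{0,1}F)(\mathbf{p})=\sum_j \pi(\mathbf{p},j)\,F_j$ (density with respect to $\mu^0$, times the appropriate normalization).
\end{itemize}
Boundedness of $\mathcal{F}^0_{\mathrm{enc}}$ makes these integrals finite. This is exactly the ingredient used for Theorem~\ref{theorem:1}, which we reuse rather than rederive.

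\textbf{Deeper latent spaces and decoding.} For $l\ge 2$, the spaces $\Omega^{l-1}$ and $\Omega^{l}$ are finite. CoTAP is then a finite entropic OT whose Sinkhorn solution exists, is unique and strictly positive, and depends continuously on $G^{l-1}$. Hence $\mathbf{S}_{\mathrm{enc}}^{l-1,l}$ and $\mathbf{S}_{\mathrm{dec}}^{l-1,l}$ are well-defined finite matrices. The transformer blocks $\mathcal{M}^l_{\mathrm{enc}}$ and $\mathcal{M}^l_{\mathrm{dec}}$ act on finite token sets and are continuous maps of finite arrays. By induction on $l$, Equation~\eqref{eq:encoding} gives well-defined finite $F^l_{\mathrm{enc}}$ and $G^l$ for all $l$. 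Descending induction from $l=L$ to $l=1$ then gives $F^l_{\mathrm{dec}}$ via Equation~\eqref{eq:decoding}.

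\textbf{Output field.} Set $\widehat{\mathcal{Y}}(\mathbf{p})=\mathrm{MLP}\bigl(\sum_j \tilde\pi(\mathbf{p},j)F^1_{\mathrm{dec},j}\bigr)$, where $\tilde\pi$ is the normalized decoding density. This is a composition of the bounded measurable function $\tilde\pi(\cdot,j)$ with a continuous MLP, so it is a bounded measurable function on $\Omega^0$. The whole composition depends only on the parameters $\theta$ (the projectors, transformer blocks and heads) and not on any discretization. This gives the neural-operator map $\mathcal{N}_\theta$.

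\textbf{Main obstacle and consistency remark.} The main technical step is the first one: establishing existence, uniqueness and measurability of the continuum entropic OT plan between $(\Omega^0,\mu^0)$ and a discrete uniform measure. We handle it through the semi-discrete Sinkhorn fixed point. The potentials $b\in\mathbb{R}^{N^1}$ solve a strictly concave finite-dimensional dual problem, and $a$ is then given explicitly by a log-sum-exp formula in $\boldsymbol{\xi}$, which is measurable (indeed continuous) in $\boldsymbol{\xi}$. Finally, we note that for the empirical measure on $P$ this construction reduces exactly to the discrete MoNo, so the discrete model is a discretization of $\mathcal{N}_\theta$.
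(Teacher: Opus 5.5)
Your proposal is correct in substance, but it takes a different route from the paper. The paper's proof (Appendix D) treats every level as a general measure space $(\Omega^l,\mu^l)$ and simply \emph{assumes} that each CoTAP coupling between $(\Omega^{l-1},\mu^{l-1})$ and $(\Omega^l,\mu^l)$ has a nonnegative measurable density satisfying the two marginal constraints. It then classifies every component as one of three operator types:
\begin{itemize}
\item a cross-space integral operator (CoTAP encoding and decoding);
\item a latent nonlocal integral operator (self-attention);
\item a point-wise local operator (LayerNorm, feed-forward, residual and skip additions, output MLP).
\end{itemize}
It closes with forward induction over the encoder and backward induction over the decoder, concluding that $\mathcal{N}_\theta$ is a finite composition of these operators.

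You instead use the fact that every latent space carries only finitely many tokens. The only genuinely continuum object is then the first coupling, a semi-discrete entropic OT problem between $\mu^0$ and the uniform measure on $N^1$ atoms. You actually solve it, via the finite-dimensional dual in $b$ and the explicit log-sum-exp formula for $a$, and everything deeper is finite-dimensional OT and continuous maps of arrays.

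Your route adds rigour exactly where the paper is weakest. You discharge the density hypothesis instead of postulating it. You obtain a decoding density that is defined and continuous at every $\mathbf{p}\in\Omega^0$, so $\widehat{\mathcal{Y}}$ is a genuine function rather than an a.e.-defined one. Your empirical-measure remark also supplies the discretization consistency that really justifies the name ``neural operator''. The paper's route buys generality (latent spaces need not be finite) and states the kernel-integral-plus-pointwise structure explicitly. You leave that structure implicit, although your first encoding and final decoding are exactly kernel integral operators against $\mu^0$ and the uniform token measure.

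Three points to tighten:
\begin{itemize}
\item Your bounded scores $s(\boldsymbol{\xi},j)$ require $\mathcal{G}^0$ to be bounded, for instance $\Omega^0$ bounded with $\mathcal{G}^0$ an MLP of positions. The theorem assumes only continuity. Some such condition, or at least integrability of $\mathcal{G}^0$ against $\mu^0$, is needed anyway for $G^1_j$ to be finite, and the paper relies on it silently as well.
\item The semi-dual in $b$ is invariant under $b\mapsto b+c\mathbf{1}$. It is therefore strictly concave only modulo constants, and the potentials are unique only up to this shift, although the plan itself is unique.
\item The reduction to discrete MoNo for the empirical measure is exact for the exact entropic OT solution defining $\mathbf{S}_{\mathrm{ot}}^{0,1}$. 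It is not exact for the finite-$T$ Sinkhorn output used in practice.
\end{itemize}
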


\subsection{MoNo Variants}

We define two variants and adjust their input and output dimensions according to each task.
Both variants contain four latent spaces, with the numbers of encoding and decoding Transformer layers set to $\{3,1,1,1\}$, and differ as follows:
\begin{itemize}
    \item \textbf{MoNo-light}: $D_g=96$, $D_f=96$, with latent token numbers of $\{512,256,128,64\}$.
    \item \textbf{MoNo}: $D_g=192$, $D_f=192$, with latent token numbers of $\{1024,512,256,128\}$.
\end{itemize}
MoNo-light has a parameter count comparable to existing methods, whereas MoNo uses a larger configuration to increase model capacity.
Notably, as shown in the Experiments section, both variants achieve higher computational efficiency than existing SOTA methods on large-scale tasks.

\section{Experiments}

\subsection{Experiment Settings}

\noindent \textbf{Benchmarks.}
To comprehensively evaluate MoNo across diverse geometries and physical systems, we conduct experiments on six widely used standard benchmarks~\cite{fno,linearno,lno}, including Airfoil, Pipe, Plasticity, Navier-Stokes (NS2D), Elasticity, and Darcy.
Following recent studies~\cite{transolver,linearno}, we further evaluate MoNo on AirfRANS~\cite{airfrans}, a real-world advanced benchmark.

\begin{figure*}[t]
    \centering
    \includegraphics[width=0.95\linewidth]{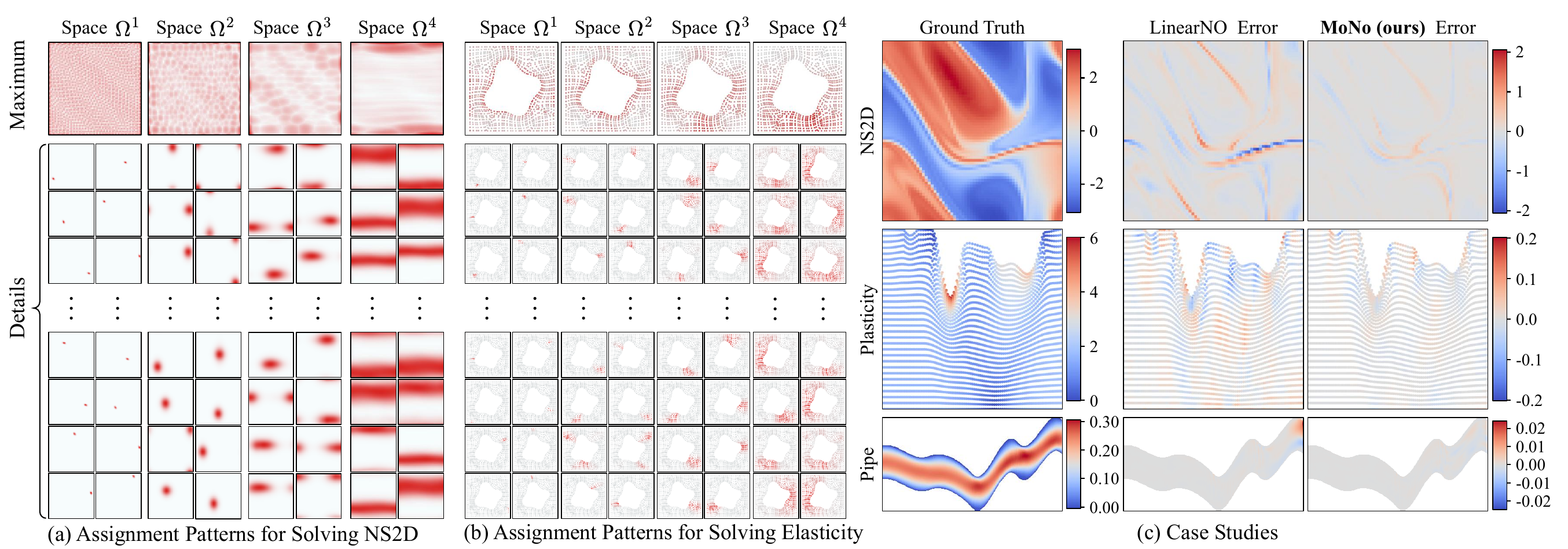}
    \caption{
    Visualization of assignment patterns and prediction errors.
    (a) The assignment patterns learned by MoNo on NS2D. (b) The assignment patterns learned by MoNo-light on Elasticity.
    (c) Case studies.
    MoNo constructs assignment relations with clear multiscale characteristics and achieves substantially lower prediction errors.
    }
    \label{fig:case_study}
\end{figure*}

\noindent \textbf{Baselines.}
We compare MoNo with SOTA neural operator methods, including FNO~\cite{fno}, GEO-FNO~\cite{geo-fno}, F-FNO~\cite{f-fno}, Galerkin Transformer~\cite{galerkin}, OFormer~\cite{oformer}, GNOT~\cite{gnot}, FactFormer~\cite{factformer}, ONO~\cite{ono}, LSM~\cite{lsm}, LNO~\cite{lno}, Transolver~\cite{transolver}, Transolver++~\cite{transolverpp}, and LinearNO~\cite{linearno}.
For AirfRANS, we additionally include MLP, PointNet~\cite{pointnet}, Graph U-Net~\cite{graphunet}, and MeshGraphNet~\cite{meshgraphnet}.

\begin{table}[!t]
  \centering
  \small
  \renewcommand\tabcolsep{2pt}
  \renewcommand{\arraystretch}{1.05}
  \begin{tabular}{l|cccc}
    \toprule
    Method & Vol. ($\downarrow$) & Surf. ($\downarrow$) & $C_L$ ($\downarrow$) & $\rho_L$ ($\uparrow$) \\
    \midrule
    MLP & 0.0081 & 0.0200 & 0.2108 & 0.9932 \\
    PointNet~\cite{pointnet} & 0.0253 & 0.0996 & 0.1973 & 0.9919 \\
    Graph U-Net (2017) & 0.0076 & 0.0144 & 0.1677 & 0.9949 \\
    MeshGraphNet (2020) & 0.0214 & 0.0387 & 0.2252 & 0.9945 \\
    \midrule
    GNO~\cite{gno} & 0.0269 & 0.0405 & 0.2016 & 0.9938 \\
    GEO-FNO~\cite{geo-fno} & 0.0361 & 0.0301 & 0.6161 & 0.9257 \\
    GALERKIN~\cite{galerkin} & 0.0074 & 0.0159 & 0.2336 & 0.9951 \\
    GNOT~\cite{gnot} & 0.0049 & 0.0152 & 0.1992 & 0.9942 \\
    GINO~\cite{gino} & 0.0297 & 0.0482 & 0.1821 & 0.9958 \\
    LNO~\cite{lno} & 0.0214 & 0.0268 & 0.1480 & 0.9744 \\
    Transolver~\cite{transolver} & 0.0023 & 0.0085 & 0.1230 & 0.9978 \\
    Transolver++ (2025) & 0.0068 & 0.0159 & 0.1880 & 0.9910 \\
    LinearNO~\cite{linearno} & \underline{0.0011} & 0.0077 & \underline{0.0491} & \textbf{0.9992} \\
    \midrule
    \textbf{MoNo-Light (ours)} & 0.0025 & \underline{0.0018} & 0.0616 & \underline{0.9986} \\
    \textbf{MoNo (ours)} & \textbf{0.0009} & \textbf{0.0013} & \textbf{0.0415} & \textbf{0.9992} \\
    \bottomrule
  \end{tabular}
  \caption{Comparison on the AirfRANS benchmark. Following previous works~\cite{transolver, linearno}, we evaluate the prediction error of surrounding (Vol.) and surface (Surf.) physical fields, as well as the lift coefficient ($C_L$). $\rho_L$ denotes Spearman's rank correlation for $C_L$. MoNo outperforms other methods.
  }
  \label{tab:airfrans_bench}
\end{table}

\noindent \textbf{Implementations.}
Please refer to the Appendix for detailed hyper-parameters and training configurations.

\subsection{Main Results}

\noindent \textbf{Standard Benchmarks.}
Table~\ref{tab:standard_bench} reports the results on standard benchmarks.
MoNo achieves the best performance across all tasks.
For the three structured-mesh tasks, including Airfoil, Pipe, and Plasticity, MoNo outperforms existing methods by at least 2.0\%, 12.5\%, and 45.5\% in terms of relative L2 error, respectively.
MoNo also achieves the best performance for complex time-dependent physical systems and point-cloud representations.

For fair comparisons, we also report the performance of MoNo-light, whose parameter count is comparable to that of SOTA methods.
MoNo-light still shows competitive and overall superior solving accuracy.
Specifically, MoNo-light outperforms SOTA methods by at least 2.0\%, 9.1\%, 3.7\%, and 16.0\% on Airfoil, Plasticity, NS2D, and Elasticity in terms of relative L2 error, respectively.
These results demonstrate that the stable cross-space projections constructed by CoTAP and the progressive multiscale hierarchy enable effective learning of multiscale physical states across diverse geometries and physical systems.

\noindent \textbf{Advanced Benchmarks.}
Table~\ref{tab:airfrans_bench} reports the results on the AirfRANS advanced benchmark.
MoNo achieves the best performance in predicting the surrounding physical fields, surface physical fields, and lift coefficient.
Specifically, MoNo outperforms the existing SOTA methods by at least 18.2\% and 83.1\% in predicting the surrounding physical fields and surface physical fields, respectively.
For aerodynamic performance evaluation, MoNo achieves an error of 0.0415 in predicting the lift coefficient, outperforming all existing methods by at least 15.5\%.
MoNo-light also outperforms all baselines by at least 76.6\% in predicting the surface physical fields.
Moreover, MoNo-light surpasses most existing SOTA methods on the remaining metrics.
These results further demonstrate that MoNo can handle complex geometries and real-world problems.

\begin{table}[!t]
  \centering
  \small
  \renewcommand\tabcolsep{8pt}
  \renewcommand{\arraystretch}{1.05}
  \begin{tabular}{l|ccccc}
    \toprule
    Method & 85$\times$85 & 141$\times$141 & 211$\times$211\\
    \midrule
    LNO & 0.0081 & 0.0074 & 0.0073\\
    Transolver & \underline{0.0055} & 0.0064 & 0.0063\\
    LinearNO & \textbf{0.0050} & \underline{0.0053} & \underline{0.0053}\\
    \midrule
    \textbf{MoNo-Light (ours)} & 0.0070 & 0.0057 & 0.0059\\
    \textbf{MoNo (ours)} & 0.0059 & \textbf{0.0049} & \textbf{0.0046}\\
    \bottomrule
  \end{tabular}
    \caption{Comparison on Darcy at multiple resolutions in relative L2 error ($\downarrow$). MoNo effectively reduces prediction errors as the spatial resolution increases.}
  \label{tab:multi_resolution_eval}
\end{table}

\begin{table}[t]
  \centering
  \small
  \setlength{\tabcolsep}{3pt}
  \renewcommand{\arraystretch}{1.05}
  \begin{tabular}{l|cc|cc}
    \toprule
    \multirow{2}{*}{Method}
    & \multicolumn{2}{c|}{OOD Reynolds}
    & \multicolumn{2}{c}{OOD Angles} \\
    \cmidrule(lr){2-3} \cmidrule(lr){4-5}
    & Vol. ($\downarrow$) & Surf. ($\downarrow$)
    & Vol. ($\downarrow$) & Surf. ($\downarrow$) \\
    \midrule
    MLP
    & 0.0669 & 0.1153
    & 0.1309 & 0.3311 \\

    PointNet
    & 0.0838 & 0.1403
    & 0.2021 & 0.4649 \\

    Graph U-Net
    & 0.0538 & 0.1168
    & 0.0979 & 0.2391 \\

    MeshGraphNet
    & 0.2789 & 0.2382
    & 0.4902 & 1.1071 \\

    \midrule
    GNO
    & 0.0833 & 0.1562
    & 0.1626 & 0.2359 \\

    Galerkin
    & 0.0330 & 0.0972
    & 0.0577 & 0.2773 \\

    GNOT
    & 0.0305 & 0.0959
    & 0.0471 & 0.3466 \\

    GINO
    & 0.0839 & 0.1825
    & 0.1589 & 0.2469 \\

    LNO
    & 0.0825 & 0.1762
    & 0.0346 & 0.0790 \\

    Transolver
    & 0.0122 & 0.0550
    & 0.0480 & 0.2335 \\

    LinearNO
    & 0.0112 & 0.0372
    & 0.0464 & 0.2500 \\

    \midrule
    \textbf{MoNo-light (ours)}
    & \underline{0.0090} & \underline{0.0146}
    & \underline{0.0222} & \underline{0.0553} \\

    \textbf{MoNo (ours)}
    & \textbf{0.0066} & \textbf{0.0121}
    & \textbf{0.0097} & \textbf{0.0248} \\

    \bottomrule
  \end{tabular}
  \caption{Comparison on the AirfRANS OOD benchmarks.
  }
  \label{tab:ood_benchmarks}
\end{table}

\noindent \textbf{Evaluation at Multiple Resolutions.}
To further evaluate the ability of models to utilize spatial observations of varying densities, we conduct experiments on Darcy at three spatial resolutions: $85\times85$, $141\times141$, and $211\times211$.
The results are reported in Table~\ref{tab:multi_resolution_eval}.
Both MoNo and MoNo-light effectively utilize the additional information provided by higher-resolution observations, achieving more accurate physical field predictions overall as the number of observation points increases.
In contrast, both Transolver and LinearNO exhibit performance degradation at higher resolutions.
In particular, at the resolution of $211\times211$, MoNo outperforms all other methods by at least 13.2\% in terms of relative L2 error.
These results further demonstrate that MoNo can effectively aggregate and propagate physical information across different observation resolutions, thereby improving solving performance as the observation density increases.

\noindent \textbf{Out-of-Distribution Evaluation.}
We further introduce out-of-distribution (OOD) experiments on the AirfRANS benchmark.
Specifically, following previous work~\cite{transolver}, we consider two OOD settings on AirfRANS: Reynolds-number extrapolation (OOD Reynolds) and angle-of-attack extrapolation (OOD Angles).
As shown in Table~\ref{tab:ood_benchmarks}, MoNo and MoNo-light outperform existing methods.
MoNo outperforms existing methods by at least 41.1\% and 67.5\% in predicting the surrounding physical fields and surface physical fields on OOD Reynolds, respectively.
In addition, MoNo also outperforms existing methods by at least 72.0\% and 68.6\% in predicting the surrounding physical fields and surface physical fields on OOD Angles, respectively.
Table~\ref{tab:ood_benchmarks} demonstrates the strong extrapolation capability of MoNo to unseen flow regimes and aerodynamic configurations.

\begin{table}[t]
  \centering
  \small
  \renewcommand\tabcolsep{5pt}
  \renewcommand{\arraystretch}{1.05}
  \begin{tabular}{l|cc}
    \toprule
    Model & Params. (M) & GFLOPs ($\downarrow$) \\
    \midrule
    GNOT~\cite{gnot} & 6.73 & 444.18 \\
    Transolver~\cite{transolver} & 2.81 & 196.19 \\
    Transolver++~\cite{transolverpp} & \underline{1.74} & 121.17 \\
    LinearNO~\cite{linearno} & 1.77 & 132.83 \\
    \midrule
    \textbf{MoNo-light (ours)} & \textbf{1.73} & \textbf{23.46} \\
    \textbf{MoNo (ours)} & 6.90 & \underline{97.26} \\
    \bottomrule
  \end{tabular}
    \caption{Efficiency comparison on parameter count (Params.) and GFLOPs ($\downarrow$). MoNo shows substantially higher computational efficiency than existing methods.}
    \label{tab:efficiency}
\end{table}

\begin{table}[!t]
  \centering
  \small
 \renewcommand\tabcolsep{5pt}
 \renewcommand{\arraystretch}{1.05}
 \begin{tabular}{cc|ccc}
    \toprule

    MultiScale & CoTAP & Airfoil & Elasticity & Pipe\\

    \midrule

    \ding{55} & \ding{55} & 0.0060 & 0.0077 & 0.0036 \\
    \ding{51} & \ding{55} & 0.0056 & 0.0072 & 0.0036 \\
    \ding{55} & \ding{51} & \underline{0.0051} & \underline{0.0070} & \underline{0.0028} \\
    \ding{51} & \ding{51} & \textbf{0.0048} & \textbf{0.0042} & \textbf{0.0027} \\
    
    \bottomrule
  \end{tabular}
  \caption{Ablation studies of CoTAP and progressive multiscale modeling in relative L2 error ($\downarrow$).}
  \label{tab:ablation_multi-mono}
\end{table}

\noindent \textbf{Efficiency.}
Table~\ref{tab:efficiency} compares the parameter counts and GFLOPs of models when processing 65,536 spatial observation points.
MoNo-light contains only 1.73M parameters and achieves the lowest computational cost.
Despite using a larger configuration with 6.90M parameters, MoNo requires only 97.26 GFLOPs, which is 19.7\% lower than that of Transolver++.
These results demonstrate that MoNo-light provides substantial advantages in both model size and computational cost, while MoNo increases representation capacity and solving performance while remaining more computationally efficient than existing methods.

\subsection{Ablation Studies}
Unless otherwise specified, ablation studies are conducted using MoNo-light.

\noindent \textbf{CoTAP.}
Table~\ref{tab:ablation_multi-mono} verifies the critical role of CoTAP.
Building upon the baseline with softmax-based projection, introducing CoTAP reduces the errors on Airfoil, Elasticity, and Pipe by 15.0\%, 9.1\%, and 22.2\%, respectively.
These results demonstrate that CoTAP effectively improves the learning of physical interactions in latent spaces by constructing balanced cross-space projections.

\noindent \textbf{Progressive Multiscale Modeling.}
Table~\ref{tab:ablation_multi-mono} also verifies the importance of progressive multiscale modeling in improving prediction performance.
Although introducing multiscale modeling alone with softmax-based projection provides only limited improvements, further incorporating progressive multiscale modeling upon CoTAP reduces the errors on Airfoil, Elasticity, and Pipe by 5.9\%, 40.0\%, and 3.6\%, respectively.
This demonstrates that the effectiveness of multiscale modeling highly depends on the stable construction of latent spaces.

\begin{figure}[!t]
    \centering
    \includegraphics[width=0.92\linewidth]{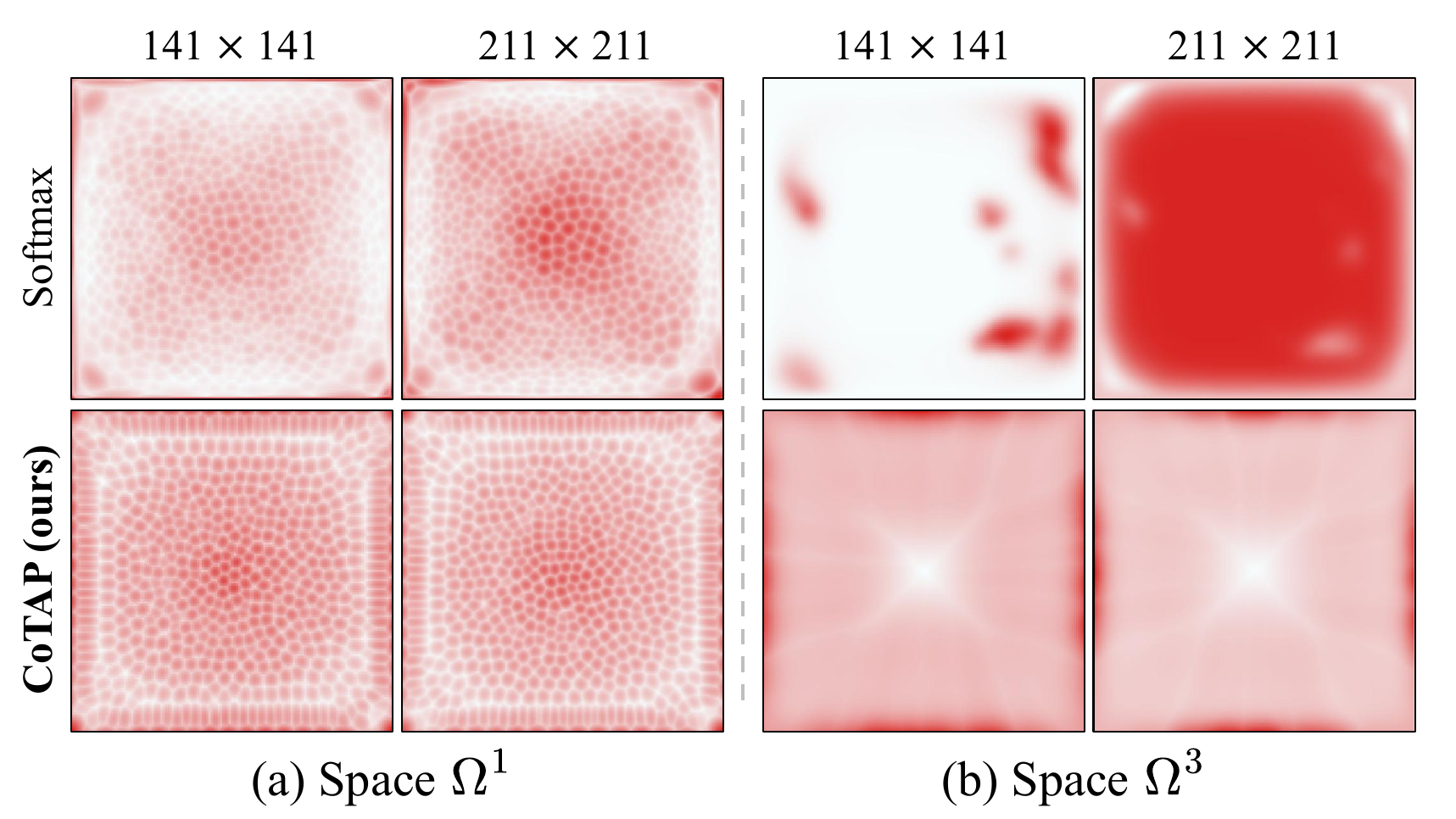}
    \caption{
    Visualization of assignments across different spatial resolutions.
    (a) The assignments in the first latent space (Space $\Omega^1$). (b) The assignments in the deeper latent space (Space $\Omega^3$).
    CoTAP constructs balanced assignment relations that remain consistent across spatial resolutions.
    }
    \label{fig:multi_resolution_assignment}
\end{figure}

\subsection{Visualizations}

\noindent \textbf{Visualization of CoTAP.}
Figures~\ref{fig:case_study}(a) and (b) visualize the assignment patterns learned by MoNo on NS2D and by MoNo-light on Elasticity, respectively.
Across both tasks with distinct geometries, CoTAP constructs assignment relations with clear multiscale characteristics.
Latent tokens in shallow spaces provide fine-grained partitions of the original physical domain, while each latent token progressively covers a larger spatial region and forms a more global representation as the hierarchy deepens.

Figure~\ref{fig:multi_resolution_assignment} further compares the assignment patterns generated by CoTAP and softmax-based projection on Darcy at resolutions of $141\times141$ and $211\times211$.
Across multiple latent spaces, CoTAP constructs balanced assignment patterns that remain consistent between the two spatial resolutions.
In contrast, softmax-based projection fails to preserve assignment consistency across resolutions, exhibits imbalanced assignments, and suffers from severe token collapse in deeper latent spaces.
These results validate the critical role of CoTAP in balancing token assignments, mitigating token collapse in deeper latent spaces, and preserving assignment stability across spatial resolutions.

\noindent \textbf{Case Studies.}
Figure~\ref{fig:case_study}(c) presents qualitative prediction results.
Compared with LinearNO, MoNo exhibits substantially lower prediction errors throughout the physical domain, particularly in high-gradient regions.
These results show that MoNo can accurately predict fine-grained physical behaviors and complex local variations.

\section{Conclusion}

In this work, we proposed MoNo, a progressive multiscale neural operator for solving PDEs on general geometries.
At its core, we propose CoTAP to formulate cross-space assignment as an entropy-regularized optimal transport problem and construct stable latent spaces.
CoTAP further unlocks stable progressive multiscale architectures on general geometries, thereby enabling more efficient learning of physical interactions.
Extensive experiments demonstrate that MoNo outperforms SOTA methods in both prediction performance and computational efficiency.
This work highlights the critical role of stable latent-space construction in developing Transformer-based neural operators with stronger representational capacity and higher computational efficiency.

\appendix

\section{Appendix A: Notation Summary}

For clarity, Table~\ref{tab:notation} summarizes the main notation used throughout this paper.
Symbols that are only used in Appendix E are defined upon their first occurrence.

\begin{table*}[!t]
\centering
\small
\renewcommand{\arraystretch}{1.13}
\begin{tabular}{@{}p{0.20\textwidth}p{0.80\textwidth}@{}}
\toprule
\textbf{Notation} & \textbf{Description} \\
\midrule

\multicolumn{2}{@{}l}{\textit{Dimensions and indices}} \\
$D_s$ & Spatial dimension of the original physical domain. \\
$D_x$ & Dimension of the input observation at each observation point. \\
$D_y$ & Dimension of the target physical field at each observation point. \\
$D_g$ & Dimension of the anchor tokens. \\
$D_f$ & Dimension of the physical state tokens. \\
$N^0$ & Number of observation points in the original observation space. \\
$N^l$ & Number of elements in $\Omega^l$. For $l\geq1$, the number of latent tokens in the $l$-th latent space. \\
$L$ & Total number of latent spaces in the progressive hierarchy. \\
$l$ & Space index, where $l=0$ denotes the original observation space and $l\in\{1,\ldots,L\}$ denotes a latent space. \\

\midrule
\multicolumn{2}{@{}l}{\textit{Spaces, observations, and tokens}} \\
$\Omega^l$ & The $l$-th space in the hierarchy, where $\Omega^0\subset\mathbb{R}^{D_s}$ is the original physical domain and $\Omega^l$ for $l\geq1$ is the $l$-th latent space. \\
$P=\{\mathbf p_i\}_{i=1}^{N^0}$ & Spatial observation point set, where $\mathbf p_i\in\Omega^0$. \\
$X=\{\mathbf x_i\}_{i=1}^{N^0}$ & Input observation set, where $\mathbf x_i\in\mathbb{R}^{D_x}$. \\
$Y=\{\mathbf y_i\}_{i=1}^{N^0}$ & Ground-truth target physical field, where $\mathbf y_i\in\mathbb{R}^{D_y}$. \\
$\hat{Y}=\{\hat{\mathbf y}_i\}_{i=1}^{N^0}$ & Discrete target physical field predicted by MoNo. \\
$G^l=\{\mathbf g_i^l\}_{i=1}^{N^l}$ & Anchor token set in $\Omega^l$, where $\mathbf g_i^l\in\mathbb{R}^{D_g}$. \\
$F_{\mathrm{enc}}^l$ & Encoded physical state token set in $\Omega^l$, with $F_{\mathrm{enc}}^l\in\mathbb{R}^{N^l\times D_f}$. \\
$F_{\mathrm{dec}}^l$ & Decoded physical state token set in $\Omega^l$, with $F_{\mathrm{dec}}^l\in\mathbb{R}^{N^l\times D_f}$. \\

\midrule
\multicolumn{2}{@{}l}{\textit{Trainable modules}} \\
$\mathcal{P}_{\mathrm{anchor}}$ & Anchor embedding module that maps observation positions to the initial anchor tokens. \\
$\mathcal{P}_{\mathrm{phy}}$ & Physical state embedding module that maps input observations to the initial physical state tokens. \\
$\mathcal{P}^l$ & MLP assignment projector that maps $G^{l-1}$ to the initial assignment scores for $\Omega^l$. \\
$\mathcal{M}_{\mathrm{enc}}^l$ & Encoding Transformer block that learns physical interactions in $\Omega^l$. \\
$\mathcal{M}_{\mathrm{dec}}^l$ & Decoding Transformer block that updates and fuses decoded physical state tokens in $\Omega^l$. \\

\midrule
\multicolumn{2}{@{}l}{\textit{CoTAP}} \\
$\mathbf S_{\mathrm{init}}^{l-1,l}$ & Initial assignment matrix generated by $\mathcal P^l$, with $\mathbf S_{\mathrm{init}}^{l-1,l}\in\mathbb{R}^{N^{l-1}\times N^l}$. \\
$\mathbf S_{\mathrm{init},ij}^{l-1,l}$ & Initial assignment score between the $i$-th element in $\Omega^{l-1}$ and the $j$-th latent token in $\Omega^l$. \\
$\mathbf S_{\mathrm{ot}}^{l-1,l}$ & OT-normalized assignment matrix between $\Omega^{l-1}$ and $\Omega^l$ with uniform marginal constraints. \\
$\mathbf S_{\mathrm{enc}}^{l-1,l}$ & Encoding projection matrix from $\Omega^{l-1}$ to $\Omega^l$. \\
$\mathbf S_{\mathrm{dec}}^{l-1,l}$ & Decoding projection matrix from $\Omega^l$ to $\Omega^{l-1}$. \\
$\mathbf S$ & Candidate transport matrix in the entropy-regularized optimal transport problem. \\
$\mathcal C^{l-1,l}$ & Transport polytope with uniform marginal constraints between $\Omega^{l-1}$ and $\Omega^l$. \\
$T$ & Number of Sinkhorn iterations used to solve the entropy-regularized optimal transport problem. \\
$\tau$ & Temperature coefficient of the entropy-regularized optimal transport problem. \\
$\mathcal H(\mathbf S)$ & Entropy regularization term of the candidate transport matrix $\mathbf S$. \\

\midrule
\multicolumn{2}{@{}l}{\textit{Continuous operator analysis}} \\
$\mu^l$ & Probability measure defined on $\Omega^l$. \\
$\mathcal G^l$ & Continuous anchor token field associated with the discrete anchor token set $G^l$. \\
$\mathcal F_{\mathrm{enc}}^l,\mathcal F_{\mathrm{dec}}^l$ & Continuous encoded and decoded physical state token fields on $\Omega^l$. \\
$\mathcal S_{\mathrm{enc}}^{l-1,l},\mathcal S_{\mathrm{dec}}^{l-1,l}$ & Continuous CoTAP encoding and decoding operators between adjacent spaces. \\
$r_{\mathcal G^{l-1}}^{l-1,l}$ & CoTAP coupling density induced by the anchor token field $\mathcal G^{l-1}$. \\
$\mathcal A^1,\alpha$ & Latent self-attention operator on $\Omega^1$ and its input-dependent attention kernel. \\
$\mathcal T$ & Composite operator formed by CoTAP and a single latent self-attention layer. \\
$\kappa$ & Input-dependent integral kernel induced by CoTAP and latent self-attention. \\
$\mathbf W_v$ & Value projection matrix in the integral representation of latent self-attention. \\
$\mathcal N_\theta$ & MoNo neural operator parameterized by learnable parameters $\theta$. \\
$\widehat{\mathcal Y}$ & Continuous target physical-field function predicted by MoNo on $\Omega^0$. \\

\bottomrule
\end{tabular}
\caption{Summary of the main notation used in MoNo.}
\label{tab:notation}
\end{table*}

\section{Appendix B: Proof of Theorem 1}
\label{app:proof_cotap_integral_operator_v2}

\begin{proof}
Assume that the continuous CoTAP coupling between
$(\Omega^0,\mu^0)$ and $(\Omega^1,\mu^1)$ admits a nonnegative
measurable density with respect to $\mu^0\otimes\mu^1$, denoted as:
\begin{equation}
r(\mathbf{p},\boldsymbol{\zeta})
:=
    r_{\mathcal{G}^0}^{0,1}
(\mathbf{p},\boldsymbol{\zeta}),
\label{eq:thm1_v2_coupling_density}
\end{equation}
where $\mu^1$ denotes the probability measure on the latent space $\Omega^1$, $r$ denotes the CoTAP coupling density induced by the anchor token field $\mathcal{G}^0$, and $\mathbf{p}$ denotes the target point in $\Omega^0$. The marginal constraints are as follows:
\begin{equation}
\begin{aligned}
\int_{\Omega^1}
r(\mathbf{p},\boldsymbol{\zeta})
\,d\mu^1(\boldsymbol{\zeta})
&=1,
&&\mu^0\text{-a.e. }\mathbf{p},\\
\int_{\Omega^0}
r(\boldsymbol{\xi},\boldsymbol{\zeta})
\,d\mu^0(\boldsymbol{\xi})
&=1,
&&\mu^1\text{-a.e. }\boldsymbol{\zeta}.
\end{aligned}
\label{eq:thm1_v2_cotap_marginals}
\end{equation}

\noindent \textbf{Continuous CoTAP.}
The continuous CoTAP encoding and decoding operators induced by this coupling are as follows:
\begin{equation}
\begin{aligned}
\mathcal{F}^1(\boldsymbol{\zeta})
&:=
\left(
    \mathcal{S}_{\mathrm{enc}}^{0,1}
    \mathcal{F}_{\mathrm{enc}}^0
\right)(\boldsymbol{\zeta})
\\
&=
\int_{\Omega^0}
r(\boldsymbol{\xi},\boldsymbol{\zeta})
\mathcal{F}_{\mathrm{enc}}^0(\boldsymbol{\xi})
\,d\mu^0(\boldsymbol{\xi}),\\[2pt]
\left(
    \mathcal{S}_{\mathrm{dec}}^{0,1}
    \Phi^1
\right)(\mathbf{p})
&:=
\int_{\Omega^1}
r(\mathbf{p},\boldsymbol{\zeta})
\Phi^1(\boldsymbol{\zeta})
\,d\mu^1(\boldsymbol{\zeta}),
\end{aligned}
\label{eq:thm1_v2_continuous_cotap}
\end{equation}
where $\mathcal{F}^1$ denotes the physical state field projected into $\Omega^1$, and $\Phi^1$ denotes a measurable latent feature field on $\Omega^1$.

\noindent \textbf{Latent self-attention.}
As established in prior work~\cite{kovachki2023neural},
standard self-attention admits an integral-operator representation.
Denoting its input-dependent attention kernel by
$\alpha(\boldsymbol{\zeta},\boldsymbol{\eta})$, the latent
self-attention operator in the present setting is:
\begin{equation}
\begin{aligned}
\left(
\mathcal{A}^1\mathcal{F}^1
\right)(\boldsymbol{\zeta})
=
\int_{\Omega^1}
\alpha(\boldsymbol{\zeta},\boldsymbol{\eta})
\mathcal{F}^1(\boldsymbol{\eta})
\mathbf{W}_v
\,d\mu^1(\boldsymbol{\eta}),
\end{aligned}
\label{eq:thm1_v2_latent_attention}
\end{equation}
where $\mathcal{A}^1$ denotes the latent self-attention operator on $\Omega^1$. The attention kernel satisfies:
\begin{equation}
\alpha(\boldsymbol{\zeta},\boldsymbol{\eta})\geq0,
\qquad
\int_{\Omega^1}
\alpha(\boldsymbol{\zeta},\boldsymbol{\eta})
\,d\mu^1(\boldsymbol{\eta})=1.
\label{eq:thm1_v2_attention_normalization}
\end{equation}

\noindent \textbf{Composition on the original space.}
To represent the composition of latent self-attention and CoTAP decoding, we first define an intermediate kernel.
Specifically, $\alpha(\boldsymbol{\zeta},\boldsymbol{\eta})$ transfers the physical state at the latent point $\boldsymbol{\eta}$ to the attention output at $\boldsymbol{\zeta}$, while $r(\mathbf{p},\boldsymbol{\zeta})$ further projects this output from $\boldsymbol{\zeta}$ to the original-space point $\mathbf{p}$.
Marginalizing over the intermediate latent point $\boldsymbol{\zeta}$ yields the effective transfer kernel from $\boldsymbol{\eta}$ to $\mathbf{p}$:
\begin{equation}
\beta(\mathbf{p},\boldsymbol{\eta})
:=
\int_{\Omega^1}
r(\mathbf{p},\boldsymbol{\zeta})
\alpha(\boldsymbol{\zeta},\boldsymbol{\eta})
\,d\mu^1(\boldsymbol{\zeta}).
\label{eq:thm1_v2_intermediate_kernel}
\end{equation}
$\beta$ is the intermediate kernel induced by the composition of latent self-attention and CoTAP decoding.

Substituting the intermediate kernel $\beta$ into the composite operator and then applying the continuous CoTAP encoding gives, for $\mu^0$-almost every $\mathbf{p}\in\Omega^0$:
\begin{equation}
\begin{aligned}
&\mathcal{T}
\bigl(\mathcal{G}^0,\mathcal{F}_{\mathrm{enc}}^0\bigr)
(\mathbf{p})
\\
&=
\int_{\Omega^1}
\beta(\mathbf{p},\boldsymbol{\eta})
\mathcal{F}^1(\boldsymbol{\eta})
\mathbf{W}_v
\,d\mu^1(\boldsymbol{\eta})
\\
&=
\int_{\Omega^1}\int_{\Omega^0}
\beta(\mathbf{p},\boldsymbol{\eta})
r(\boldsymbol{\xi},\boldsymbol{\eta})
\mathcal{F}_{\mathrm{enc}}^0(\boldsymbol{\xi})
\mathbf{W}_v
\,d\mu^0(\boldsymbol{\xi})
\,d\mu^1(\boldsymbol{\eta}).
\end{aligned}
\label{eq:thm1_v2_decode_attention}
\end{equation}

To express this double integral as an integral solely over the original space $\Omega^0$, we first verify the absolute integrability required by Fubini's theorem:
\begin{equation}
\begin{aligned}
&\int_{\Omega^1}\int_{\Omega^0}
\beta(\mathbf{p},\boldsymbol{\eta})
r(\boldsymbol{\xi},\boldsymbol{\eta})
\\[-2pt]
&\qquad\cdot
\left\|
\mathcal{F}_{\mathrm{enc}}^0(\boldsymbol{\xi})
\mathbf{W}_v
\right\|
\,d\mu^0(\boldsymbol{\xi})
\,d\mu^1(\boldsymbol{\eta})
\\
&\leq
\left\|\mathcal{F}_{\mathrm{enc}}^0\right\|_{L^\infty(\mu^0)}
\left\|\mathbf{W}_v\right\|
\\[-2pt]
&\qquad\cdot
\int_{\Omega^1}
\beta(\mathbf{p},\boldsymbol{\eta})
\left[
\int_{\Omega^0}
r(\boldsymbol{\xi},\boldsymbol{\eta})
\,d\mu^0(\boldsymbol{\xi})
\right]
\,d\mu^1(\boldsymbol{\eta})
\\
&=
\left\|\mathcal{F}_{\mathrm{enc}}^0\right\|_{L^\infty(\mu^0)}
\left\|\mathbf{W}_v\right\|
<\infty.
\end{aligned}
\label{eq:thm1_v2_absolute_integrability}
\end{equation}
The equality follows from
Equations~\eqref{eq:thm1_v2_cotap_marginals},
\eqref{eq:thm1_v2_attention_normalization}, and the definition of
$\beta$ in Equation~\eqref{eq:thm1_v2_intermediate_kernel}.
Therefore, Fubini's theorem allows us to exchange the order of integration:
\begin{equation}
\begin{aligned}
&\mathcal{T}
\bigl(\mathcal{G}^0,\mathcal{F}_{\mathrm{enc}}^0\bigr)
(\mathbf{p})
\\
&=
\int_{\Omega^0}
\left[
\int_{\Omega^1}
\beta(\mathbf{p},\boldsymbol{\eta})
r(\boldsymbol{\xi},\boldsymbol{\eta})
\,d\mu^1(\boldsymbol{\eta})
\right]
\\
&\qquad\cdot
\mathcal{F}_{\mathrm{enc}}^0(\boldsymbol{\xi})
\mathbf{W}_v
\,d\mu^0(\boldsymbol{\xi}).
\end{aligned}
\label{eq:thm1_v2_fubini_reordering}
\end{equation}

The inner integral in Equation~\eqref{eq:thm1_v2_fubini_reordering} defines the effective original-space kernel:
\begin{equation}
\kappa(\mathbf{p},\boldsymbol{\xi})
:=
\int_{\Omega^1}
\beta(\mathbf{p},\boldsymbol{\eta})
r(\boldsymbol{\xi},\boldsymbol{\eta})
\,d\mu^1(\boldsymbol{\eta}),
\label{eq:thm1_v2_original_kernel}
\end{equation}
It follows that, for $\mu^0$-almost every $\mathbf{p}\in\Omega^0$, we can get:
\begin{equation}
\begin{aligned}
&\mathcal{T}
\bigl(\mathcal{G}^0,\mathcal{F}_{\mathrm{enc}}^0\bigr)(\mathbf{p})
=
\int_{\Omega^0}
\kappa(\mathbf{p},\boldsymbol{\xi})
\mathcal{F}_{\mathrm{enc}}^0(\boldsymbol{\xi})
\mathbf{W}_v
\,d\mu^0(\boldsymbol{\xi}),
\end{aligned}
\end{equation}
which proves the integral-operator representation on $\Omega^0$.
\end{proof}

\section{Appendix C: Discrete Kernel Representation of Theorem 1}
\label{app:discrete_cotap_kernel}

We further provide the finite-dimensional counterpart of the operator considered
in Theorem 1.
Using the CoTAP projection matrices defined in the main paper, we show
that the composition of CoTAP encoding, one pure self-attention layer,
and CoTAP decoding admits an exact input-dependent kernel-matrix
representation on the discrete observation space.
The encoded latent features are:
\begin{equation}
\begin{aligned}
F^1
&=
    \left(\mathbf{S}_{\mathrm{enc}}^{0,1}\right)^{\top}
F_{\mathrm{enc}}^0.
\end{aligned}
\label{eq:discrete_cotap_encoded_features}
\end{equation}

In addition, the latent self-attention is defined as:
\begin{equation}
\begin{aligned}
\mathbf{Q}
&=F^1\mathbf{W}_q,
&\\
\mathbf{K}
&=F^1\mathbf{W}_k,
\\
\mathbf{A}(F^1)
&=
\operatorname{Softmax}
\left(
\frac{\mathbf{Q}\mathbf{K}^{\top}}{\sqrt{D_a}}
\right),
\end{aligned}
\label{eq:discrete_latent_attention}
\end{equation}
where $\mathbf{Q}$ and $\mathbf{K}$ denote the query and key feature matrices, $\mathbf{A}(\cdot)$ denotes the row-normalized attention matrix based on input, $\mathbf{W}_q$ and $\mathbf{W}_k$ denote the query and key projections, and $D_a$ denotes the query/key dimension.
The corresponding latent attention output is
$\mathbf{A}(F^1)F^1\mathbf{W}_v$.
The projection matrices and the resulting CoTAP decoding are as follows:
\begin{equation}
\begin{aligned}
\mathbf{S}_{\mathrm{enc}}^{0,1}
&=
N^1\mathbf{S}_{\mathrm{ot}}^{0,1},\\
\mathbf{S}_{\mathrm{dec}}^{0,1}
&=
N^0\mathbf{S}_{\mathrm{ot}}^{0,1},\\
F_{\mathrm{dec}}^0
&=
    \mathbf{S}_{\mathrm{dec}}^{0,1}
\mathbf{A}(F^1)\cdot
    \left(\mathbf{S}_{\mathrm{enc}}^{0,1}\right)^{\top}
F_{\mathrm{enc}}^0
\mathbf{W}_v,
\end{aligned}
\label{eq:discrete_cotap_attention_composition_v2}
\end{equation}
where $F_{\mathrm{dec}}^0$ denotes the physical state feature matrix projected back to the original observation space after the composition of CoTAP encoding, latent self-attention, and CoTAP decoding.
Therefore, the effective discrete kernel is:
\begin{equation}
\begin{aligned}
&\mathbf{K}_{\mathrm{eff}}^0
\left(G^0,F_{\mathrm{enc}}^0\right)
\\
&:=
    \mathbf{S}_{\mathrm{dec}}^{0,1}
\mathbf{A}(F^1)
    \left(\mathbf{S}_{\mathrm{enc}}^{0,1}\right)^{\top}
\\
&=
\left(
N^0\mathbf{S}_{\mathrm{ot}}^{0,1}
\right)
\mathbf{A}(F^1)
\left(
N^1\mathbf{S}_{\mathrm{ot}}^{0,1}
\right)^{\top}
\\
&=
N^0N^1
    \mathbf{S}_{\mathrm{ot}}^{0,1}
\mathbf{A}
\left(
N^1
    \left(\mathbf{S}_{\mathrm{ot}}^{0,1}\right)^{\top}
F_{\mathrm{enc}}^0
\right)
\cdot
    \left(\mathbf{S}_{\mathrm{ot}}^{0,1}\right)^{\top},
\end{aligned}
\label{eq:discrete_effective_kernel_v2}
\end{equation}
where $\mathbf{K}_{\mathrm{eff}}^0\in\mathbb{R}^{N^0\times N^0}$
acts directly on the original observation space.
The complete discrete composition can therefore be written as:
\begin{equation}
F_{\mathrm{dec}}^0
=
\mathbf{K}_{\mathrm{eff}}^0
\left(G^0,F_{\mathrm{enc}}^0\right)
F_{\mathrm{enc}}^0
\mathbf{W}_v.
\label{eq:discrete_kernel_identity_v2}
\end{equation}
This is an exact algebraic identity for finite $N^0$ and $N^1$.
The kernel is input-dependent because
$\mathbf{S}_{\mathrm{ot}}^{0,1}$ is induced by $G^0$, while
$\mathbf{A}(\cdot)$ depends on $F^1$, which is jointly determined by
$G^0$ and $F_{\mathrm{enc}}^0$.

\section{Appendix D: Proof of Theorem 2}
\label{app:proof_mono_neural_operator_v2}

\begin{proof}
We consider the deterministic inference map of MoNo. For every $l=1,\ldots,L$, assume that the CoTAP coupling
between $(\Omega^{l-1},\mu^{l-1})$ and $(\Omega^l,\mu^l)$ admits a
nonnegative measurable density with respect to
$\mu^{l-1}\otimes\mu^l$.
By the continuous CoTAP construction in
Equation~\eqref{eq:thm1_v2_continuous_cotap},
    $\mathcal{S}_{\mathrm{enc}}^{l-1,l}$ and
    $\mathcal{S}_{\mathrm{dec}}^{l-1,l}$ are cross-space
integral operators from $\Omega^{l-1}$ to $\Omega^l$ and from
$\Omega^l$ to $\Omega^{l-1}$, respectively.
Both operators depend on the current anchor token field
$\mathcal{G}^{l-1}$.

As shown in Equation~\eqref{eq:thm1_v2_latent_attention}, latent self-attention is a nonlocal integral operator on the corresponding latent space.
Multi-head concatenation and output projection act only along the feature dimension, while LayerNorm, feed-forward networks, and activation functions act point-wise on each feature field.
Moreover, residual and skip addition can be written as the point-wise local operators.
Thus, after lifting branched features to a product function space, every standard Transformer block is a finite composition of latent-space nonlocal integral operators and point-wise local operators.
Specifically, each $\mathcal{M}_{\mathrm{enc}}^l$ and
$\mathcal{M}_{\mathrm{dec}}^l$ has the same operator structure.

\noindent \textbf{Single latent space.}
For $L=1$, the complete model is as follows:
\begin{equation}
\begin{aligned}
\mathcal{F}_{\mathrm{enc}}^1
&=
\mathcal{M}_{\mathrm{enc}}^1
\left(
    \mathcal{S}_{\mathrm{enc}}^{0,1}
\mathcal{F}_{\mathrm{enc}}^0
\right),\\
\mathcal{F}_{\mathrm{dec}}^1
&=
\mathcal{M}_{\mathrm{dec}}^1
\left(\mathcal{F}_{\mathrm{enc}}^1\right),\\
\mathcal{F}_{\mathrm{dec}}^0
&=
    \mathcal{S}_{\mathrm{dec}}^{0,1}
\mathcal{F}_{\mathrm{dec}}^1,\\
\widehat{\mathcal{Y}}
&=
\mathrm{MLP}_{\mathrm{out}}
\left(\mathcal{F}_{\mathrm{dec}}^0\right).
\end{aligned}
\label{eq:thm2_v2_single_space}
\end{equation}
The CoTAP operators in Equation~\eqref{eq:thm2_v2_single_space} are
induced by $\mathcal{G}^0$.
Hence, these equations define a learnable mapping
$(\mathcal{G}^0,\mathcal{F}_{\mathrm{enc}}^0)\mapsto
\widehat{\mathcal{Y}}$ from input fields on $\Omega^0$ to an output field on $\Omega^0$.
By the operator characterization above, this mapping is a finite composition of cross-space integral, latent-space nonlocal, and point-wise local operators. The claim therefore holds for a single latent space.

\paragraph{Multiple latent spaces.}
For $l=1,\ldots,L$, the progressive encoding recursion is:
\begin{equation}
\begin{aligned}
\mathcal{F}_{\mathrm{enc}}^l
&=
\mathcal{M}_{\mathrm{enc}}^l
\left(
    \mathcal{S}_{\mathrm{enc}}^{l-1,l}
\mathcal{F}_{\mathrm{enc}}^{l-1}
\right),\\
\mathcal{G}^l
&=
    \mathcal{S}_{\mathrm{enc}}^{l-1,l}
\mathcal{G}^{l-1}.
\end{aligned}
\label{eq:thm2_v2_multiscale_encoding}
\end{equation}
The base case $l=1$ follows from the single-space construction.
If $(\mathcal{G}^{l-1},\mathcal{F}_{\mathrm{enc}}^{l-1})$ is obtained
from the original input through the permitted operators, then
Equation~\eqref{eq:thm2_v2_multiscale_encoding} applies one cross-space
integral operator followed by the encoder model in $\Omega^l$.
Therefore, by forward induction, every
$(\mathcal{G}^l,\mathcal{F}_{\mathrm{enc}}^l)$ is a function field on
$\Omega^l$ produced by a finite composition of the same operators.
The progressive decoder is:
\begin{equation}
\mathcal{F}_{\mathrm{dec}}^l
=
\begin{cases}
\mathcal{M}_{\mathrm{dec}}^L
\left(\mathcal{F}_{\mathrm{enc}}^L\right),
& l=L,\\
\mathcal{M}_{\mathrm{dec}}^l
(\mathcal{F}_{\mathrm{enc}}^l+\mathcal{S}_{\mathrm{dec}}^{l,l+1}
\mathcal{F}_{\mathrm{dec}}^{l+1}),
& 1 \leq l < L.
\end{cases}
\label{eq:thm2_v2_multiscale_decoding}
\end{equation}
At the deepest latent space, $\mathcal{F}_{\mathrm{dec}}^L$ is obtained by applying the decoding Transformer model $\mathcal{M}_{\mathrm{dec}}^L$ to $\mathcal{F}_{\mathrm{enc}}^L$.
Therefore, $\mathcal{F}_{\mathrm{dec}}^L$ has the operator structure characterized above.
The projected field is then combined with $\mathcal{F}_{\mathrm{enc}}^l$ through point-wise skip addition and subsequently updated by $\mathcal{M}_{\mathrm{dec}}^l$, which is a finite composition of latent-space nonlocal and point-wise local operators.
Therefore, backward induction from $l=L$ to $l=1$ shows that every decoded field $\mathcal{F}_{\mathrm{dec}}^l$ is produced by a finite composition of these operators.
Then, the predicted physical-field function $\widehat{\mathcal{Y}}$ is obtained as:
\begin{equation}
\begin{aligned}
\mathcal{F}_{\mathrm{dec}}^0
&=
    \mathcal{S}_{\mathrm{dec}}^{0,1}
\mathcal{F}_{\mathrm{dec}}^1,\\
\widehat{\mathcal{Y}}
&=
\mathrm{MLP}_{\mathrm{out}}
\left(\mathcal{F}_{\mathrm{dec}}^0\right).
\end{aligned}
\label{eq:thm2_v2_output}
\end{equation}
$\mathcal{S}_{\mathrm{dec}}^{0,1}$ maps $\mathcal{F}_{\mathrm{dec}}^1$ back to the original space $\Omega^0$ through a cross-space integral operator, while $\mathrm{MLP}_{\mathrm{out}}$ transforms the resulting field into the target physical field through point-wise linear mappings and nonlinear activations.
Consequently, $\widehat{\mathcal{Y}}$ is a function on $\Omega^0$ obtained through a finite composition of cross-space integral, latent-space nonlocal, and point-wise local operators.
Combining the forward and backward inductions, MoNo defines:
\begin{equation}
\mathcal{N}_{\theta}:
\left(
\mathcal{G}^0,
\mathcal{F}_{\mathrm{enc}}^0
\right)
\longmapsto
\widehat{\mathcal{Y}}.
\label{eq:thm2_v2_complete_operator}
\end{equation}

Combining the encoding and decoding processes, $\mathcal{N}_{\theta}$ is a finite composition of cross-space integral, latent-space nonlocal, and point-wise local operators.
Therefore, MoNo constitutes a neural operator that maps the original input functions to the target physical-field function on $\Omega^0$.
\end{proof}

\section{Appendix E: CoTAP Iteration}

\subsection{Standard Log-domain Sinkhorn Iteration}
We first describe the standard log-domain Sinkhorn iteration~\cite{sinkhorn1967diagonal} for solving the entropy-regularized optimal transport problem in CoTAP.
Let $\overline{\mathbf{S}}_{\mathrm{init}}^{l-1,l}=\mathbf{S}_{\mathrm{init}}^{l-1,l}/\tau$, and initialize the row and column dual variables $\mathbf{u}^{0}\in\mathbb{R}^{N^{l-1}}$ and $\mathbf{v}^{0}\in\mathbb{R}^{N^l}$ as zero vectors.
Since CoTAP adopts uniform marginal constraints, the row and column marginals in the log domain are $-\log N^{l-1}$ and $-\log N^l$, respectively.
Therefore, at the $t$-th iteration, $\mathbf{u}^{t+1}$ and $\mathbf{v}^{t+1}$ are updated as:
\begin{equation}
\begin{aligned}
    u_i^{t+1}
    &=
    -\log N^{l-1}
    -
    \log\sum_{j=1}^{N^l}
    \exp\left(
        \overline{S}_{\mathrm{init},ij}^{l-1,l}
        +
        v_j^{t}
    \right),\\
    v_j^{t+1}
    &=
    -\log N^l
    -
    \log\sum_{i=1}^{N^{l-1}}
    \exp\left(
        \overline{S}_{\mathrm{init},ij}^{l-1,l}
        +
        u_i^{t+1}
    \right),
\end{aligned}
\label{eq:sinkhorn_update}
\end{equation}
where, $t\in\{0,\ldots,T-1\}$ denotes the iteration index, and $T$ denotes the total number of iterations.
After $T$ iterations, CoTAP obtains the OT-normalized assignment matrix:
\begin{equation}
    S_{\mathrm{ot},ij}^{l-1,l}
    =
    \exp\left(
        \overline{S}_{\mathrm{init},ij}^{l-1,l}
        +
        u_i^{T}
        +
        v_j^{T}
    \right).
\label{eq:sinkhorn_assignment}
\end{equation}

Equation~\eqref{eq:sinkhorn_update} alternately enforces the uniform row and column marginals in the log domain, yielding a balanced assignment.
However, the standard implementation produces large intermediate tensors at each iteration and retains the complete computation graph for automatic differentiation during training, resulting in substantial GPU memory consumption for large-scale observations.
To address this issue, we further develop a Triton-based fused Sinkhorn solver, termed CoTAP Iteration.

\subsection{Fused Forward in CoTAP Iteration}

CoTAP Iteration implements the row and column dual updates at each iteration of Equation~\eqref{eq:sinkhorn_update} as fused Triton kernels.
Each Triton program processes one row or column in parallel, performs assignment scaling, log-sum-exp reduction, and dual-variable update using on-chip memory, and writes only the updated dual variable back to GPU memory.
After $T$ iterations, a fused output kernel generates $\mathbf{S}_{\mathrm{enc}}^{l-1,l}$ and $\mathbf{S}_{\mathrm{dec}}^{l-1,l}$ without separately materializing $\mathbf{S}_{\mathrm{ot}}^{l-1,l}$.

The fused forward avoids materializing intermediate tensors of size $N^{l-1}\times N^l$ at each iteration and caches only the dual variables required for the backward pass.

\subsection{Fused Backward in CoTAP Iteration}

End-to-end training of the assignment projector $\mathcal{P}^l(\cdot)$ requires propagating the gradient from $\mathbf{S}_{\mathrm{enc}}^{l-1,l}$ and $\mathbf{S}_{\mathrm{dec}}^{l-1,l}$ back to the initial assignment matrix $\mathbf{S}_{\mathrm{init}}^{l-1,l}$.
In the standard log-domain Sinkhorn implementation, automatic differentiation computes this gradient by retaining the complete computation graph of all Sinkhorn iterations.
However, the fused forward incorporates assignment reconstruction, log-sum-exp reductions, and dual-variable updates into Triton kernels, whose internal operations are not recorded by the automatic differentiation system.
Therefore, we explicitly reverse the finite Sinkhorn iterations and accumulate reverse-mode gradients using the dual-variable histories cached during the forward pass.

As $\mathbf{S}_{\mathrm{enc}}^{l-1,l}$ and $\mathbf{S}_{\mathrm{dec}}^{l-1,l}$ are outputs of the fused forward, their upstream gradients
$\partial\mathcal{L}/\partial\mathbf{S}_{\mathrm{enc}}^{l-1,l}$ and
$\partial\mathcal{L}/\partial\mathbf{S}_{\mathrm{dec}}^{l-1,l}$
are directly available at the backward pass, where $\mathcal{L}$ denotes the training loss.
The fused backward first reverses the fused output operation and accumulates the gradients from both projection outputs into the final log-assignment.
It then traverses the Sinkhorn iterations in reverse order, propagating gradients through each column dual update followed by the corresponding row dual update.
This procedure yields the same gradient with respect to $\mathbf{S}_{\mathrm{init}}^{l-1,l}$ as automatic differentiation through the standard finite-step log-domain Sinkhorn iterations.

We first describe the reverse-mode gradient accumulation with zero and one Sinkhorn iteration and then generalize it to an arbitrary number of iterations $T$.
For simplicity, we denote the final log-assignment after $T$ iterations by $\mathbf{Z}^{l-1, l}$:
\begin{equation}
    Z_{ij}^{l-1, l}
    =
    \overline{S}_{\mathrm{init},ij}^{l-1,l}
    +
    u_i^{T}
    +
    v_j^{T}.
\label{eq:sinkhorn_final_log_assignment}
\end{equation}

Since
$\mathbf{S}_{\mathrm{enc}}^{l-1,l}
=N^l\mathbf{S}_{\mathrm{ot}}^{l-1,l}$
and
$\mathbf{S}_{\mathrm{dec}}^{l-1,l}
=N^{l-1}\mathbf{S}_{\mathrm{ot}}^{l-1,l}$,
the upstream gradients from the two projection outputs are accumulated at
$\mathbf{Z}^{l-1,l}$:
\begin{equation}
\begin{aligned}
    \frac{\partial\mathcal{L}}
    {\partial Z_{ij}^{l-1,l}}
    &=
    \frac{\partial\mathcal{L}}
    {\partial S_{\mathrm{enc},ij}^{l-1,l}}
    \frac{\partial S_{\mathrm{enc},ij}^{l-1,l}}
    {\partial Z_{ij}^{l-1,l}}
    +
    \frac{\partial\mathcal{L}}
    {\partial S_{\mathrm{dec},ij}^{l-1,l}}
    \frac{\partial S_{\mathrm{dec},ij}^{l-1,l}}
    {\partial Z_{ij}^{l-1,l}}\\
    &=
    \left(
        N^l
        \frac{\partial\mathcal{L}}
        {\partial S_{\mathrm{enc},ij}^{l-1,l}}
        +
        N^{l-1}
        \frac{\partial\mathcal{L}}
        {\partial S_{\mathrm{dec},ij}^{l-1,l}}
    \right)
    S_{\mathrm{ot},ij}^{l-1,l}.
\end{aligned}
\label{eq:sinkhorn_output_reverse}
\end{equation}
In the fused backward, neither $\mathbf{Z}^{l-1,l}$ nor
$\mathbf{S}_{\mathrm{ot}}^{l-1,l}$ is cached as a complete tensor during the forward pass.
Instead, both quantities are reconstructed element-wise on demand from
$\mathbf{S}_{\mathrm{init}}^{l-1,l}$ within the backward kernel and are discarded immediately after evaluating the local gradient in
Equation~\eqref{eq:sinkhorn_output_reverse}.

\noindent \textbf{Backward with Zero Sinkhorn Iteration.}
We first consider $T=0$ as the base case for analyzing the fused output operation.
When $T=0$, no dual-variable update is performed.
Therefore, the fused backward only needs to reverse the fused output operation.
In this case, the log-assignment and resulting assignment at each position are:
\begin{equation}
\begin{aligned}
    Z_{ij}^{l-1,l}
    &=
    \overline{S}_{\mathrm{init},ij}^{l-1,l}
    =
    \frac{
        S_{\mathrm{init},ij}^{l-1,l}
    }{\tau},\\
    S_{\mathrm{ot},ij}^{l-1,l}
    &=
    \exp\left(
        Z_{ij}^{l-1,l}
    \right).
\end{aligned}
\label{eq:sinkhorn_t0_reconstruction}
\end{equation}

Therefore, propagating the accumulated gradient in
Equation~\eqref{eq:sinkhorn_output_reverse}
to the initial assignment gives:
\begin{equation}
\begin{aligned}
    \frac{\partial\mathcal{L}}
    {\partial S_{\mathrm{init},ij}^{l-1,l}}
    &=
    \frac{\partial\mathcal{L}}
    {\partial Z_{ij}^{l-1,l}}
    \frac{\partial Z_{ij}^{l-1,l}}
    {\partial S_{\mathrm{init},ij}^{l-1,l}}\\
    &=
    \frac{1}{\tau}
    \left(
        N^l
        \frac{\partial\mathcal{L}}
        {\partial S_{\mathrm{enc},ij}^{l-1,l}}
        +
        N^{l-1}
        \frac{\partial\mathcal{L}}
        {\partial S_{\mathrm{dec},ij}^{l-1,l}}
    \right)
    S_{\mathrm{ot},ij}^{l-1,l}.
\end{aligned}
\label{eq:sinkhorn_t0_compact}
\end{equation}

\noindent \textbf{Backward with One Sinkhorn Iteration.}
When $T=1$, the forward pass performs one row dual update followed by one column dual update, and constructs the final log-assignment as:
\begin{equation}
    Z_{ij}^{l-1,l}
    =
    \overline{S}_{\mathrm{init},ij}^{l-1,l}
    +
    u_i^1
    +
    v_j^1.
\label{eq:sinkhorn_t1_log_assignment}
\end{equation}
The fused backward first accumulates
$\partial\mathcal{L}/\partial Z_{ij}^{l-1,l}$
from the two projection outputs according to
Equation~\eqref{eq:sinkhorn_output_reverse},
and then reverses the column and row dual updates.
The normalized weights associated with the row and column log-sum-exp reductions are:
\begin{equation}
\begin{aligned}
    h_{ij}^0
    &=
    \frac{
        \exp\left(
            \overline{S}_{\mathrm{init},ij}^{l-1,l}
            +
            v_j^0
        \right)
    }{
        \displaystyle
        \sum_{k=1}^{N^l}
        \exp\left(
            \overline{S}_{\mathrm{init},ik}^{l-1,l}
            +
            v_k^0
        \right)
    },\\
    q_{ij}^1
    &=
    \frac{
        \exp\left(
            \overline{S}_{\mathrm{init},ij}^{l-1,l}
            +
            u_i^1
        \right)
    }{
        \displaystyle
        \sum_{k=1}^{N^{l-1}}
        \exp\left(
            \overline{S}_{\mathrm{init},kj}^{l-1,l}
            +
            u_k^1
        \right)
    }.
\end{aligned}
\label{eq:sinkhorn_t1_normalized_weights}
\end{equation}

Directly incorporating local derivatives into the reverse-mode accumulation gives the complete gradients of the scaled initial assignment:
\begin{equation}
    \frac{\partial\mathcal{L}}
    {\partial\overline{S}_{\mathrm{init},ij}^{l-1,l}}
    =
    \underbrace{
        \frac{\partial\mathcal{L}}
        {\partial Z_{ij}^{l-1,l}}
    }_{\text{final log-assignment}}
    -
    \underbrace{
        q_{ij}^1
        \frac{\partial\mathcal{L}}
        {\partial v_j^1}
    }_{\text{column dual update}}
    -
    \underbrace{
        h_{ij}^0
        \frac{\partial\mathcal{L}}
        {\partial u_i^1}
    }_{\text{row dual update}}.
\label{eq:sinkhorn_t1_reverse_accumulation}
\end{equation}
Finally, propagating through the temperature scaling gives the gradient with respect to the initial assignment matrix:
\begin{equation}
\begin{aligned}
    \frac{\partial\mathcal{L}}
    {\partial S_{\mathrm{init},ij}^{l-1,l}}
    &=
    \frac{1}{\tau}
    \frac{\partial\mathcal{L}}
    {\partial\overline{S}_{\mathrm{init},ij}^{l-1,l}}\\
    &=
    \frac{1}{\tau}
    \left(
        \frac{\partial\mathcal{L}}
        {\partial Z_{ij}^{l-1,l}}
        -
        q_{ij}^1
        \frac{\partial\mathcal{L}}
        {\partial v_j^1}
        -
        h_{ij}^0
        \frac{\partial\mathcal{L}}
        {\partial u_i^1}
    \right).
\end{aligned}
\label{eq:sinkhorn_t1_reverse}
\end{equation}
Since the dual variables $\mathbf{u}^1$ and $\mathbf{v}^1$ are cached during the fused forward, the backward kernels can explicitly accumulate the corresponding gradients in reverse-mode order.
The normalized weights $\mathbf{h}^0$ and $\mathbf{q}^1$ are reconstructed on demand from the initial assignment and cached dual variables, without being stored as complete assignment-sized tensors.

\begin{table*}[t]
\centering
\small
\setlength{\tabcolsep}{5pt}
\resizebox{\textwidth}{!}{
\begin{tabular}{cc|ccc|ccc}
\toprule
\multirow{2}{*}{$N^{l-1}$}
& \multirow{2}{*}{$N^l$}
& \multicolumn{3}{c|}{Standard Log-domain Sinkhorn Iteration}
& \multicolumn{3}{c}{CoTAP Iteration} \\
\cmidrule(lr){3-5}\cmidrule(lr){6-8}
& & Forward (ms) $\downarrow$
& Forward+Backward (ms) $\downarrow$
& Peak Memory (GB) $\downarrow$
& Forward (ms) $\downarrow$
& Forward+Backward (ms) $\downarrow$
& Peak Memory (GB) $\downarrow$ \\
\midrule
4,096  & 512   & 1.34   & 3.87   & 0.50 & 0.74  & 3.57   & 0.22 \\
32,768 & 1,024 & 44.16  & 110.76 & 8.00 & 12.05 & 71.19  & 3.50 \\
65,536 & 2,048 & 176.24 & OOM    & OOM  & 49.47 & 466.42 & 14.00 \\
\bottomrule
\end{tabular}
}
\caption{
Performance comparison between standard Sinkhorn iterations and CoTAP Iteration.
Peak Memory denotes the peak incremental GPU memory during forward-and-backward computation.
OOM indicates that the computation exceeds the available memory of a 24-GB GPU.
}
\label{tab:cotap_iteration_efficiency}
\end{table*}

\noindent \textbf{Backward with an Arbitrary Number of Sinkhorn Iterations.}
For an arbitrary number of iterations $T$, the fused backward starts from the final log-assignment and reverses the column and row dual updates of each iteration in the opposite order of the forward computation.
For the $t$-th Sinkhorn iteration, the normalized weights associated with the row and column log-sum-exp reductions are:
\begin{equation}
\begin{aligned}
    h_{ij}^{t}
    &=
    \frac{
        \exp\left(
            \overline{S}_{\mathrm{init},ij}^{l-1,l}
            +
            v_j^{t}
        \right)
    }{
        \displaystyle
        \sum_{k=1}^{N^l}
        \exp\left(
            \overline{S}_{\mathrm{init},ik}^{l-1,l}
            +
            v_k^{t}
        \right)
    },\\
    q_{ij}^{t+1}
    &=
    \frac{
        \exp\left(
            \overline{S}_{\mathrm{init},ij}^{l-1,l}
            +
            u_i^{t+1}
        \right)
    }{
        \displaystyle
        \sum_{k=1}^{N^{l-1}}
        \exp\left(
            \overline{S}_{\mathrm{init},kj}^{l-1,l}
            +
            u_k^{t+1}
        \right)
    },
\end{aligned}
\label{eq:sinkhorn_t_normalized_weights}
\end{equation}
where $t\in\{0,\ldots,T-1\}$.
Furthermore, the local derivatives of the row and column dual updates at iteration $t$ are:
\begin{equation}
\begin{aligned}
    \frac{\partial u_i^{t+1}}
    {\partial\overline{S}_{\mathrm{init},ij}^{l-1,l}}
    &=
    \frac{\partial u_i^{t+1}}
    {\partial v_j^t}
    =
    -h_{ij}^t,\\
    \frac{\partial v_j^{t+1}}
    {\partial\overline{S}_{\mathrm{init},ij}^{l-1,l}}
    &=
    \frac{\partial v_j^{t+1}}
    {\partial u_i^{t+1}}
    =
    -q_{ij}^{t+1}.
\end{aligned}
\label{eq:sinkhorn_t_local_derivatives}
\end{equation}
The complete reverse-mode gradient of the scaled initial assignment is:
\begin{equation}
\begin{aligned}
    \frac{\partial\mathcal{L}}
    {\partial\overline{S}_{\mathrm{init},ij}^{l-1,l}}
    &=
    \underbrace{
        \frac{\partial\mathcal{L}}
        {\partial Z_{ij}^{l-1,l}}
    }_{\text{final log-assignment}}
    -
    \underbrace{
        \sum_{t=0}^{T-1}
        q_{ij}^{t+1}
        \frac{\partial\mathcal{L}}
        {\partial v_j^{t+1}}
    }_{\text{column dual updates}}
    -
    \underbrace{
        \sum_{t=0}^{T-1}
        h_{ij}^{t}
        \frac{\partial\mathcal{L}}
        {\partial u_i^{t+1}}
    }_{\text{row dual updates}}.
\end{aligned}
\label{eq:sinkhorn_t_reverse_accumulation}
\end{equation}
Propagating through the temperature scaling gives the gradient with respect to the initial assignment matrix:
\begin{equation}
\begin{aligned}
    \frac{\partial\mathcal{L}}
    {\partial S_{\mathrm{init},ij}^{l-1,l}}
    &=
    \frac{1}{\tau}
    \frac{\partial\mathcal{L}}
    {\partial\overline{S}_{\mathrm{init},ij}^{l-1,l}}\\
    &=
    \frac{1}{\tau}
    \Bigg[
        \frac{\partial\mathcal{L}}
        {\partial Z_{ij}^{l-1,l}}
        -
        \sum_{t=0}^{T-1}
        \left(
            q_{ij}^{t+1}
            \frac{\partial\mathcal{L}}
            {\partial v_j^{t+1}}
            +
            h_{ij}^{t}
            \frac{\partial\mathcal{L}}
            {\partial u_i^{t+1}}
        \right)
    \Bigg].
\end{aligned}
\label{eq:sinkhorn_t_gradient}
\end{equation}
Similar to Equation~\eqref{eq:sinkhorn_t1_reverse}, all gradients with respect to the dual variables in Equation~\eqref{eq:sinkhorn_t_reverse_accumulation}, including $\partial\mathcal{L}/\partial v_j^{t+1}$ and $\partial\mathcal{L}/\partial u_i^{t+1}$, are explicitly accumulated by the fused backward kernels in reverse-mode order, while $\textbf{h}^t$ and $\textbf{q}^t$ are reconstructed on demand from the scaled initial assignment and cached dual variables.
When $T=0$, the summations are empty and
Equation~\eqref{eq:sinkhorn_t_gradient}
reduces to Equation~\eqref{eq:sinkhorn_t0_compact}.
When $T=1$, Equation~\eqref{eq:sinkhorn_t_gradient} reduces to
Equation~\eqref{eq:sinkhorn_t1_reverse}.

\subsection{Computational Efficiency of CoTAP Iteration}

To evaluate the computational efficiency of CoTAP Iteration, we compare it with the standard log-domain Sinkhorn iteration implementation based on PyTorch.
All experiments use FP32, a batch size of 4, and four Sinkhorn iterations on an NVIDIA GeForce RTX 4090.
We report the runtime over three runs after one warm-up run and the peak GPU memory during forward-and-backward computation.

As shown in Table~\ref{tab:cotap_iteration_efficiency}, CoTAP Iteration substantially reduces the forward runtime.
When $(N^{l-1},N^l)=(32{,}768,1{,}024)$, it reduces the forward time from 44.16 ms to 12.05 ms and the forward-and-backward time from 110.76 ms to 71.19 ms, corresponding to reductions of 72.7\% and 35.7\%, respectively.
CoTAP Iteration also substantially reduces training memory.
For $(N^{l-1},N^l)=(4{,}096,5{,}12)$ and $(N^{l-1},N^l)=(32{,}768,1{,}024)$, CoTAP Iteration decreases the peak incremental memory from 0.50 GB and 8.00 GB to 0.22 GB and 3.50 GB, respectively, yielding reductions of approximately 56\%.
At the largest scale of $(65{,}536,2{,}048)$, the standard implementation runs out of memory during the backward pass, whereas CoTAP Iteration completes forward-and-backward computation with 14.00 GB of peak incremental memory.

These results show that the fused forward and explicit fused backward effectively reduce the materialization and computational-graph storage of assignment-sized intermediate tensors, enabling CoTAP to scale more efficiently to large cross-space assignments.

\section{Appendix F: Implementation Details}

\subsection{Benchmarks}

We mainly adopt benchmark configurations and data splits as in previous works~\cite{transolver, linearno}.

\noindent \textbf{Airfoil.}
The Airfoil benchmark~\cite{geo-fno} evaluates Mach number prediction around different airfoil geometries.
Geometries are generated by deforming the baseline NACA-0012 airfoil and discretized into structured meshes of size $221\times51$.
The input and output are the two-dimensional grid coordinates ($221\times51\times2$) and Mach numbers ($221\times51\times1$), respectively.
The dataset contains 1,000 training samples and 200 testing samples.

\noindent \textbf{Darcy.}
The Darcy benchmark~\cite{fno} evaluates steady-state pressure-field prediction under different porous-medium distributions, with the original physical process discretized on a $421\times421$ standard grid.
The input consists of the two-dimensional coordinates ($421\times421\times2$) and corresponding porous-medium coefficient ($421\times421\times1$) at each grid point.
The output is the pressure at each grid point, with a shape of $421\times421\times1$.
For the multiple-resolution evaluation, we construct Darcy benchmark variants at different downsampled resolutions, including $85\times85$, $141\times141$, and $211\times211$.
The dataset contains 1,000 training samples and 200 testing samples.

\noindent \textbf{Elasticity.}
The Elasticity benchmark~\cite{geo-fno} evaluates the prediction of internal stress distributions for different material structures.
Each structure is represented by 972 point-cloud observations.
The input consists of the two-dimensional coordinates of these points, with a shape of $972\times2$
The output is the stress value at each point, with a shape of $972\times1$.
The dataset contains 1,000 training samples and 200 testing samples.

\noindent \textbf{Navier-Stokes (NS2D).}
The Navier-Stokes benchmark~\cite{fno}, denoted as NS2D, evaluates autoregressive prediction of two-dimensional time-dependent flow fields.
The flow field is discretized on a $64\times64$ Cartesian grid.
The input consists of the velocity fields from the previous 10 time steps, with a shape of $64\times64\times10\times1$.
The output contains the velocity fields for the subsequent 10 time steps with a shape of $64\times64\times10\times1$.
The dataset contains 1,000 training samples and 200 testing samples.

\noindent \textbf{Pipe.}
The Pipe benchmark~\cite{geo-fno} evaluates velocity-field prediction.
The geometries are generated by varying the pipe centerline and are discretized into structured meshes of size $129\times129$.
The input consists of the two-dimensional coordinates of the mesh points, with a shape of $129\times129\times2$
The output is the velocity at each point, with a shape of $129\times129\times1$.
The dataset contains 1,000 training samples and 200 testing samples.

\noindent \textbf{Plasticity.}
The Plasticity benchmark~\cite{geo-fno} evaluates the future deformation of a plastic material.
For each sample, the die geometry is discretized into a structured mesh of size $101\times31$.
The input includes the two-dimensional coordinates ($101\times31\times2$) and corresponding external force at each mesh point ($101\times31\times1$).
The output describes the deformation of each point along four directions over the subsequent 20 time steps, with a shape of $101\times31\times20\times4$.
Unlike the autoregressive prediction used for NS2D, Plasticity encodes each target time step as a time embedding and concatenates it into the model input, allowing the deformation at each time step to be predicted independently.
The dataset contains 900 training samples with different die shapes and 80 testing samples.

\noindent \textbf{AirfRANS.}
AirfRANS~\cite{airfrans} describes two-dimensional incompressible steady-state Reynolds-Averaged Navier-Stokes flows over different airfoil geometries, Reynolds numbers, and angles of attack.
Each case is represented by a point cloud containing a varying number of observation points.
We denote the number of observation points in the $i$-th case by $N_{\mathrm{AirfRANS}}^i$.

The input consists of two-dimensional spatial coordinates ($N_{\mathrm{AirfRANS}}^i\times2$), two-dimensional inlet velocity ($N_{\mathrm{AirfRANS}}^i\times2$), Euclidean distance to the airfoil ($N_{\mathrm{AirfRANS}}^i\times1$), and two-dimensional outward-pointing unit surface normal ($N_{\mathrm{AirfRANS}}^i\times2$).
The output consists of the two-dimensional velocity ($N_{\mathrm{AirfRANS}}^i\times2$), pressure ($N_{\mathrm{AirfRANS}}^i\times1$), and turbulent kinematic viscosity ($N_{\mathrm{AirfRANS}}^i\times1$).
AirfRANS contains 1,000 samples in total.
Following previous works~\cite{transolver,linearno}, we use different data splits according to the evaluation setting.
For the main experiment, 720, 80, and 200 samples are used for training, validation, and testing, respectively.
The detailed configurations of the OOD experiments are provided in Appendix H.

\begin{table*}[!t]
    \centering
    \small
    \renewcommand{\tabcolsep}{5pt}
    \renewcommand{\arraystretch}{1.05}
    \begin{tabular}{lccccccc}
        \toprule
        Benchmark & Loss & Epochs & Warmup Epochs & LR & Optimizer & Scheduler & Batch Size \\
        \midrule
        Airfoil    & rL2 & 500 & 100 & $8\times10^{-4}$ & AdamW & OneCycle & 4 \\
        Elasticity & rL2 & 500 & 100 & $8\times10^{-4}$ & AdamW & OneCycle & 4 \\
        NS2D       & rL2 & 500 & 100 & $8\times10^{-4}$ & AdamW & OneCycle & 2 \\
        Pipe       & rL2 & 500 & 100 & $8\times10^{-4}$ & AdamW & OneCycle & 4 \\
        Plasticity & rL2 & 500 & 100 & $8\times10^{-4}$ & AdamW & OneCycle & 8 \\
        \bottomrule
    \end{tabular}
    \caption{Training hyper-parameters on the standard benchmarks. MoNo and MoNo-light use the same settings.}
    \label{tab:standard_training_settings}
\end{table*}

\subsection{Evaluation Metrics}

We use relative L2 error to evaluate prediction accuracy across all benchmarks.
For AirfRANS, we additionally evaluate the corresponding aerodynamic force coefficients and their Spearman rank correlations as previous works~\cite{transolver, linearno}.

\noindent \textbf{Relative L2 Error.}
For each test sample, the relative L2 error, termed $\mathcal{E}_{\mathrm{rL2}}$, is defined as the L2 error between the predicted and ground-truth physical fields normalized by the L2 norm of the ground truth:
\begin{equation}
\mathcal{E}_{\mathrm{rL2}}
=
\frac{
\left\|
Y-\widehat{Y}
\right\|_2
}{
\left\|
Y
\right\|_2
},
\label{eq:relative_l2_metric}
\end{equation}
where $Y$ and $\widehat{Y}$ denote the ground-truth and predicted physical fields, respectively, and $\|\cdot\|_2$ denotes the L2 norm.
The reported result is averaged over all testing samples.

\noindent \textbf{Lift Coefficient.}
The lift coefficient $C_L$ is used to evaluate global aerodynamic performance on AirfRANS.
It is computed by integrating the pressure and viscous forces over the airfoil surface to obtain the lift force, which is subsequently normalized by the freestream dynamic pressure and reference area.
The detailed computation follows Transolver~\cite{transolver} and LinearNO~\cite{linearno}.

\noindent \textbf{Spearman Rank Correlation.}
The Spearman rank correlation $\rho_L$ measures the ranking consistency between the predicted and ground-truth lift coefficients.
Given $n$ testing samples, it is defined as the Pearson correlation between the ranks of the ground-truth and predicted lift coefficient sequences:
\begin{equation}
\rho_L
=
\frac{
\operatorname{Cov}
\left(
\operatorname{rank}(\mathbf{c}_L),
\operatorname{rank}(\widehat{\mathbf{c}}_L)
\right)
}{
\sigma_{\operatorname{rank}(\mathbf{c}_L)}
\sigma_{\operatorname{rank}(\widehat{\mathbf{c}}_L)}
},
\label{eq:spearman_correlation}
\end{equation}
where $\mathbf{c}_L$ and $\widehat{\mathbf{c}}_L$ denote the ground-truth and predicted lift coefficient sequences over all testing samples, respectively.
$\operatorname{Cov}(\cdot,\cdot)$ denotes the covariance, $\operatorname{rank}(\cdot)$ denotes the ranking operation, and $\sigma_{\operatorname{rank}(\cdot)}$ denotes the standard deviation.
A value of $\rho_L$ closer to 1 indicates stronger ranking consistency.

\subsection{Baselines}

For baselines, we primarily use the results reported by LinearNO~\cite{linearno}.
For the multiple-resolution evaluation, we implement and evaluate the baselines using their publicly available code.

\subsection{MoNo}

\noindent \textbf{CoTAP.}
For all benchmarks, the temperature coefficient $\tau$ and the number of iterations $T$ in CoTAP are set to 1.0 and 8, respectively.
Each assignment projector $\mathcal{P}^l(\cdot)$ is implemented as an MLP with three hidden linear layers.
Except for Darcy, we further apply row-wise weight normalization to the final linear layer of $\mathcal{P}^1(\cdot)$.
Specifically, each row of its weight matrix is first normalized by its L2 norm and then multiplied by a learnable scale initialized to 3.0.
This design prevents the model from increasing the assignment scores of specific latent tokens merely by enlarging the norms of the corresponding output weights during training, thereby improving the stability of initial assignment learning.

\noindent \textbf{Training.}
All experiments are conducted with a random seed of 0.
MoNo and MoNo-light use the same training hyper-parameters on the standard benchmarks, as summarized in Table~\ref{tab:standard_training_settings}.
Compared with LinearNO and Transolver, MoNo and MoNo-light use the same or fewer total optimization steps for each benchmark, ensuring a fair comparison.

For the multiple-resolution evaluation on Darcy, MoNo and MoNo-light are trained for 500 epochs using the AdamW optimizer, relative L2 loss, a OneCycle scheduler with 100 warmup epochs, and a batch size of 4.
For MoNo, the learning rate is set to $8\times10^{-4}$ at all resolutions.
We observe that MoNo-light does not fully converge with a learning rate of $8\times10^{-4}$.
Therefore, we use slightly larger learning rates while keeping all other training hyper-parameters unchanged.
Specifically, the learning rate is set to $1\times10^{-3}$ at resolutions of $85\times85$ and $141\times141$, and to $1.4\times10^{-3}$ at a resolution of $211\times211$.

Following previous works~\cite{transolver,linearno}, we sample 32,000 points from each AirfRANS case during training and repeatedly perform sampling and prediction during inference until all points are covered.
Since MoNo relies on stable mapping, we find that the fully random sampling strategy with a batch size of 1 adopted in previous works causes severe training instability.
To address this issue, each sampled case contains all surface points together with randomly sampled volume points, yielding 32,000 points in total.
We further set the batch size to 4 and increase the number of training epochs to preserve the same total number of optimization steps.
Both MoNo and MoNo-light use the relative L2 loss, AdamW optimizer, and OneCycle scheduler, with the learning rate and number of warmup epochs set to $2\times10^{-4}$ and 100, respectively.
Following previous works, we jointly supervise the prediction of physical fields in the surrounding region and on the surface, assigning both loss terms a weight of 1.0.
Both OOD Reynolds and OOD Angles experiments use the same training hyper-parameters as the main AirfRANS experiment.

\noindent \textbf{Hardware and Software Environment.}
All experiments are conducted using a single NVIDIA GeForce RTX 4090 GPU with up to 600 GB of system memory.
The core software environment consists of PyTorch 2.8.0, CUDA 12.8, cuDNN 9.10.2, and Triton 3.4.0.

\section{Appendix G: Efficiency Analysis}

Table~\ref{tab:efficiency-analysis} reports the parameter counts and computational costs.
All results use FP32 inference with a batch size of one.
All models use two-channel inputs and single-channel outputs for evaluation.
MoNo-light consistently achieves the lowest computational cost across all input sizes, while MoNo also exhibits increasingly clear computational advantages and outperforms all baselines at larger input scales.

\section{Appendix H: Detailed OOD Results}

The training and test ranges for the OOD Reynolds and OOD Angles settings are summarized in Table~\ref{tab:ood_generalization_configuration}.
Table~\ref{tab:ood_benchmarks_full} further presents the complete OOD results on AirfRANS, including the lift coefficient error $C_L$ and its Spearman rank correlation $\rho_L$.
MoNo achieves the best performance across all metrics under both OOD settings.
On OOD Reynolds, MoNo achieves a $C_L$ error of 0.1050, outperforming the strongest baseline, Transolver, by 35.3\%, while improving the $\rho_L$ from 0.9951 (LinearNO) to 0.9983.
On OOD Angles, MoNo further achieves a $C_L$ error of 0.0782, outperforming the strongest baseline, LinearNO, by 20.8\%, and improves $\rho_L$ from 0.9963 to 0.9970.
These results demonstrate that MoNo can extrapolate not only local physical fields but also global aerodynamic performance under unseen flow conditions.

\begin{table}[!t]
\centering
\small
\setlength{\tabcolsep}{5pt}
\renewcommand{\arraystretch}{0.95}
\begin{tabular}{l|ccc}
\toprule
Model & Parameters (M) & Input Points $N^0$ & GFLOPs \\
\midrule
\multirow{7}{*}{GNOT} & \multirow{7}{*}{6.73}
& 1,024  & 6.94 \\
& & 2,048  & 13.88 \\
& & 4,096  & 27.76 \\
& & 8,192  & 55.52 \\
& & 16,384 & 111.04 \\
& & 32,768 & 222.09 \\
& & 65,536 & 444.18 \\
\midrule
\multirow{7}{*}{Transolver} & \multirow{7}{*}{2.81}
& 1,024  & 3.08 \\
& & 2,048  & 6.14 \\
& & 4,096  & 12.27 \\
& & 8,192  & 24.53 \\
& & 16,384 & 49.06 \\
& & 32,768 & 98.10 \\
& & 65,536 & 196.19 \\
\midrule
\multirow{7}{*}{Transolver++} & \multirow{7}{*}{1.74}
& 1,024  & 1.90 \\
& & 2,048  & 3.79 \\
& & 4,096  & 7.58 \\
& & 8,192  & 15.15 \\
& & 16,384 & 30.30 \\
& & 32,768 & 60.59 \\
& & 65,536 & 121.17 \\
\midrule
\multirow{7}{*}{LinearNO} & \multirow{7}{*}{1.77}
& 1,024  & 2.08 \\
& & 2,048  & 4.15 \\
& & 4,096  & 8.30 \\
& & 8,192  & 16.60 \\
& & 16,384 & 33.21 \\
& & 32,768 & 66.41 \\
& & 65,536 & 132.83 \\
\midrule
\multirow{7}{*}{\textbf{MoNo-light}} & \multirow{7}{*}{1.73}
& 1,024  & 1.23 \\
& & 2,048  & 1.58 \\
& & 4,096  & 2.29 \\
& & 8,192  & 3.70 \\
& & 16,384 & 6.52 \\
& & 32,768 & 12.17 \\
& & 65,536 & 23.46 \\
\midrule
\multirow{7}{*}{\textbf{MoNo}} & \multirow{7}{*}{6.90}
& 1,024  & 8.41 \\
& & 2,048  & 9.82 \\
& & 4,096  & 12.64 \\
& & 8,192  & 18.28 \\
& & 16,384 & 29.56 \\
& & 32,768 & 52.13 \\
& & 65,536 & 97.26 \\
\bottomrule
\end{tabular}

\caption{Efficiency comparison with different numbers of input points. MoNo achieves higher computational efficiency.}
\label{tab:efficiency-analysis}
\end{table}

\begin{table*}[!t]
    \centering
    \small
    \renewcommand{\tabcolsep}{6pt}
    \renewcommand{\arraystretch}{1.1}
    \begin{tabular}{lcc}
        \toprule
        Split & OOD Reynolds: Reynolds Number Range & OOD Angles: AoA Range \\
        \midrule
        Training Set
        & $[3\times10^{6},\,5\times10^{6}]$
        & $[-2.5^{\circ},\,12.5^{\circ}]$ \\
        Test Set
        & $[2\times10^{6},\,3\times10^{6}]
          \cup[5\times10^{6},\,6\times10^{6}]$
        & $[-5^{\circ},\,-2.5^{\circ}]
          \cup[12.5^{\circ},\,15^{\circ}]$ \\
        \bottomrule
    \end{tabular}
    \caption{Training and test ranges for the AirfRANS OOD Reynolds and OOD Angles evaluations.}
    \label{tab:ood_generalization_configuration}
\end{table*}

\begin{table*}[!t]
  \centering
  \small
  \renewcommand\tabcolsep{6pt}
  \renewcommand{\arraystretch}{1.05}
  \begin{tabular}{l|cccccccc}
    \toprule
    
    \multirow{2}{*}{Method} & \multicolumn{4}{c}{AirfRANS (OOD Reynolds)} & \multicolumn{4}{c}{AirfRANS (OOD Angles)} \\
    \cmidrule(lr){2-5} \cmidrule(lr){6-9}
    & Vol. ($\downarrow$) & Surf. ($\downarrow$) & $C_L$ ($\downarrow$) & $\rho_L$ ($\uparrow$) & Vol. ($\downarrow$) & Surf. ($\downarrow$) & $C_L$ ($\downarrow$) & $\rho_L$ ($\uparrow$) \\

    \midrule
    MLP & 0.0669 & 0.1153 & 0.6205 & 0.9578 & 0.1309 & 0.3311 & 0.4128 & 0.9572 \\
    PointNet~\cite{pointnet} & 0.0838 & 0.1403 & 0.3836 & 0.9806 & 0.2021 & 0.4649 & 0.4425 & 0.9784 \\
    Graph U-Net~\cite{graphunet} & 0.0538 & 0.1168 & 0.4664 & 0.9645 & 0.0979 & 0.2391 & 0.3756 & 0.9816 \\
    MeshGraphNet~\cite{meshgraphnet} & 0.2789 & 0.2382 & 1.7718 & 0.7631 & 0.4902 & 1.1071 & 0.6525 & 0.8927 \\
    
    \midrule
    GNO~\cite{gno} & 0.0833 & 0.1562 & 0.4408 & 0.9878 & 0.1626 & 0.2359 & 0.3038 & 0.9884 \\
    GALERKIN~\cite{galerkin} & 0.0330 & 0.0972 & 0.4615 & 0.9826 & 0.0577 & 0.2773 & 0.3814 & 0.9821 \\
    GNOT~\cite{gnot} & 0.0305 & 0.0959 & 0.3268 & 0.9865 & 0.0471 & 0.3466 & 0.3497 & 0.9868 \\
    GINO~\cite{gino} & 0.0839 & 0.1825 & 0.4180 & 0.9645 & 0.1589 & 0.2469 & 0.2583 & 0.9923 \\
    LNO~\cite{lno} & 0.0825 & 0.1762 & 0.7440 & 0.9399 & 0.0346 & 0.0790 & 0.3732 & 0.9905 \\
    Transolver~\cite{transolver} & 0.0122 & 0.0550 & \underline{0.1622} & 0.9904 & 0.0480 & 0.2335 & 0.2438 & 0.9948 \\
    LinearNO~\cite{linearno} & 0.0112 & 0.0372 & 0.2400 & \underline{0.9951} & 0.0464 & 0.2500 & 0.0987 & \underline{0.9963} \\

    \midrule
    \textbf{MoNo-Light (ours)} & \underline{0.0090} & \underline{0.0146} & 0.1816 & 0.9936 & \underline{0.0222} & \underline{0.0553} & \underline{0.0951} & 0.9953 \\
    \textbf{MoNo (ours)} & \textbf{0.0066} & \textbf{0.0121} & \textbf{0.1050} & \textbf{0.9983} & \textbf{0.0097} & \textbf{0.0248} & \textbf{0.0782} & \textbf{0.9970} \\

    \bottomrule
  \end{tabular}
  \caption{Comparison on the AirfRANS OOD Reynolds and OOD Angles benchmarks. We evaluate both the surrounding (Vol.) and surface (Surf.) physical fields, as well as the lift coefficient ($C_L$). $\rho_L$ denotes Spearman's rank correlation for $C_L$.}
  \label{tab:ood_benchmarks_full}
\end{table*}

\section{Appendix I: More Ablation Studies}

\begin{table}[t]
  \centering
  \small
  \renewcommand\tabcolsep{5pt}
  \renewcommand{\arraystretch}{1.05}
  \begin{tabular}{ccc|ccc}
    \toprule
    $N^1$ & $D_g$ \& $D_f$ & Params. (M)
    & Airfoil & Elasticity & Pipe \\
    \midrule
    256  & 48  & 0.44  & 0.0060 & 0.0127 & 0.0037 \\
    512  & 48  & 0.46  & 0.0053 & 0.0123 & 0.0036 \\
    512  & 96  & 1.73  & 0.0048 & 0.0042 & 0.0027 \\
    512  & 192 & 6.71  & 0.0048 & 0.0033 & 0.0022 \\
    1024 & 192 & 6.90  & 0.0048 & 0.0033 & 0.0021 \\
    1024 & 384 & 26.77 & 0.0046 & 0.0035 & 0.0021 \\
    2048 & 384 & 27.50 & 0.0049 & 0.0031 & 0.0023 \\
    \bottomrule
  \end{tabular}
  \caption{Ablation studies of the number of tokens in the first latent space and token dimensions in relative L2 error ($\downarrow$).}
  \label{tab:ablation_multi-hyperparameter}
\end{table}

\noindent \textbf{Latent Token Number and Token Dimensions.}
Table~\ref{tab:ablation_multi-hyperparameter} analyzes the effects of the number of latent tokens $N^1$ in the first latent space $\Omega^1$ and the token dimensions of anchor tokens and physical state tokens.
Increasing $N^1$ and the token dimensions initially improves performance across the three tasks, after which performance gradually saturates, and further scaling provides no consistent gains.

Compared with the initial configuration using $N^1=256$ and token dimensions of 48, MoNo-light uses $N^1=512$ and token dimensions of 96, reducing the errors on Airfoil, Elasticity, and Pipe by 20.0\%, 66.9\%, and 27.0\%, respectively.
MoNo further uses $N^1=1024$ and token dimensions of 192, reducing the errors on the three tasks by 20.0\%, 74.0\%, and 43.2\%, respectively, compared with the initial configuration.
These results show that MoNo-light achieves a favorable balance between model scale and overall solution accuracy, while MoNo further improves overall accuracy through increased representational capacity.

\noindent \textbf{CoTAP Hyper-parameters.}
Table~\ref{tab:ablation_cotap_hyperparameters} analyzes the effects of the temperature coefficient $\tau$ and the number of iterations $T$ with MoNo-light.
The temperature coefficient $\tau$ controls the sharpness of the assignment distribution.
A smaller $\tau$ weakens entropy regularization and produces sharper assignments, whereas a larger $\tau$ yields smoother assignments.
With $T=8$, both $\tau=0.5$ and $\tau=1.0$ achieve stable performance, whereas increasing $\tau$ to 4.0 reduces prediction performance.
These results indicate that latent tokens require sufficiently sharp assignments to better represent local physical information.
Increasing the number of iterations allows the assignment matrix to satisfy the marginal constraints more closely, effectively imposing stronger normalization, but also makes convergence more difficult.
The results show that increasing $T$ from 8 to 12 provides no consistent performance improvement and noticeably reduces prediction accuracy on Elasticity.

In all experiments, we use $\tau=1.0$ and $T=8$ as the default configuration to balance assignment sharpness, marginal constraint enforcement, and training convergence.

\begin{table}[!t]
    \centering
    \small
    \renewcommand{\tabcolsep}{5pt}
    \renewcommand{\arraystretch}{1.05}
    \begin{tabular}{cc|ccc}
        \toprule
        $\tau$ & $T$ & Airfoil & Elasticity & Pipe \\
        \midrule
        0.5 & 8  & 0.0048 & 0.0041 & 0.0023 \\
        4.0 & 8  & 0.0050 & 0.0046 & 0.0027 \\
        1.0 & 4  & 0.0045 & 0.0043 & 0.0026 \\
        1.0 & 12 & 0.0048 & 0.0054 & 0.0024 \\
        \midrule
        1.0 & 8  & 0.0048 & 0.0042 & 0.0027 \\
        \bottomrule
    \end{tabular}
    \caption{Ablation studies of $\tau$ and $T$ in CoTAP by relative L2 error ($\downarrow$).}
    \label{tab:ablation_cotap_hyperparameters}
\end{table}

\begin{table}[!t]
    \centering
    \small
    \renewcommand{\tabcolsep}{3pt}
    \renewcommand{\arraystretch}{1.05}
    \begin{tabular}{cc|ccc}
        \toprule
        Input for $\mathbf{S}_{\mathrm{init}}^{l-1,l}$
        & Reduction Factor & Airfoil & Elasticity & Pipe \\
        \midrule
        $F_{\mathrm{enc}}^{l-1}$ & $\times2$ & 0.0050 & 0.0058 & 0.0031 \\
        $G^{l-1}$                & $\times1$ & 0.0046 & 0.0050 & 0.0026 \\
        $G^{l-1}$                & $\times4$ & 0.0048 & 0.0064 & 0.0030 \\
        \midrule
        $G^{l-1}$                & $\times2$ & 0.0048 & 0.0042 & 0.0027 \\
        \bottomrule
    \end{tabular}
    \caption{Ablation studies of the input for initial assignment construction and the token reduction factor between adjacent latent spaces in relative L2 error ($\downarrow$).}
    \label{tab:ablation_cotap_designs}
\end{table}

\noindent \textbf{Architectural Designs.}
Table~\ref{tab:ablation_cotap_designs} analyzes the input for initial assignment construction and the token reduction factor between adjacent latent spaces.
Constructing assignments from $F_{\mathrm{enc}}^{l-1}$ enables case-specific projections conditioned on the current physical states but substantially reduces the stability of CoTAP across latent spaces.
With limited training data, this additional flexibility encourages the model to fit complex case-specific cross-space projections instead of sufficiently learning physical interactions within each latent space.

For the token reduction factor, $\times1$ preserves the number of tokens and only redistributes them between adjacent spaces, thereby providing no effective hierarchical information aggregation.
In contrast, $\times4$ compresses physical states too rapidly and causes substantial information loss.
Therefore, we construct CoTAP from $G^{l-1}$ and use a token reduction factor of $\times2$ by default to balance cross-space projection stability, progressive information aggregation, and representational capacity at each scale.

\section{Appendix J: The Use of Large Language Models}

All major technical contributions and theoretical derivations in this work were conceived and developed by the authors.
Large language models (LLMs) were used for language polishing, auxiliary analysis, and checking theoretical derivations.
All LLM-assisted descriptions and analyses were carefully reviewed and verified by the authors.

\bibliography{aaai2027}

\end{document}